\newcites{appendix}{References}
\newtheorem{theorem}{Theorem}
\newtheorem{lemma}{Lemma}
\newtheorem{corollary}{Corollary}
\newtheorem{proposition}{Proposition}
\newtheorem{example}{Example}
\newtheorem{definition}{Definition}
\newtheorem{claim}{Claim}
\newcommand{\set}[1]{\{#1\}}
\newcommand{\DatalogPm}{$\text{Datalog}^{\pm}$}
\newcommand{\DatalogZ}{$\text{Datalog}_\mathbb{Z}$}
\newcommand{\DatalogFS}{Datalog$^{FS}$}
\newcommand{\lprog}{\mathcal{P}}
\newcommand{\dset}{\mathcal{D}}
\newcommand{\ipret}{\mathcal{I}}
\newcommand{\pipret}{\mathcal{J}}
\newcommand{\hleft}{\mathsf{L}}
\newcommand{\hright}{\mathsf{R}}
\newcommand{\Z}{\mathbb{Z}}
\newcommand{\npcls}{\textsc{NP}}
\newcommand{\pcls}{\textsc{P}}
\newcommand{\nlogspacecls}{\textsc{NLOGSPACE}}
\newcommand{\logspacecls}{\textsc{LOGSPACE}}
\newcommand{\conpcls}{\textsc{coNP}}
\newcommand{\cqans}{CQAns}
\newcommand{\bcqeval}{BCQEval}
\newcommand{\factent}{FACTEnt}
\newcommand{\head}{head}
\newcommand{\body}{body}
\newcommand{\atoms}{atoms}
\newcommand{\certans}{cert}
\newcommand{\frontier}{frontier}
\newcommand{\vars}{vars}
\newcommand{\vare}{var$_\exists$}
\newcommand{\pos}{ pos}
\newcommand{\aff}{ affected}
\newcommand{\nonaff}{nonaffected}
\newcommand{\chase}{chase}
\newcommand{\sch}{sch}
\newcommand{\binlang}{\{0,1\}^*}
\title{Complexity of Arithmetic in Warded  \DatalogPm{}}
\author{
Lucas Berent$^1$\and
Markus Nissl$^1$\And
Emanuel Sallinger$^{1,2}$\\
\affiliations
$^1$TU Vienna\\
$^2$University of Oxford
}
\begin{document}
\maketitle
\begin{abstract}
Warded \DatalogPm{} extends the logic-based language Datalog with existential quantifiers in rule heads. Existential rules are needed for advanced reasoning tasks, e.g., ontological reasoning. The theoretical efficiency guarantees of Warded \DatalogPm{} do not cover extensions crucial for data analytics, such as arithmetic. Moreover, despite the significance of arithmetic for common data analytic scenarios, no decidable fragment of any \DatalogPm{} language extended with arithmetic has been identified. We close this gap by defining a new language that extends Warded \DatalogPm{} with arithmetic and prove its \pcls{}-completeness. Furthermore, we present an efficient reasoning algorithm for our newly defined language and prove descriptive complexity results for a recently introduced Datalog fragment with integer arithmetic, thereby closing an open question. We lay the theoretical\- foundation for highly expressive \DatalogPm{} languages that combine the power of advanced recursive rules and arithmetic while guaranteeing efficient reasoning algorithms for applications in mo\-dern AI systems, such as Knowledge Graphs.\footnote{This paper is based on the thesis~\cite{berent2021foundations}.}
\end{abstract}

\section{Introduction}
The resurgence of declarative programming and specifically Datalog has been observed in research recently. In particular for systems supporting data analytic tasks -- ranging from aggregation of data to query answering -- Datalog is a key constituent~\cite{DBLP:conf/aaai/KaminskiGKH20}. Datalog has tractable reasoning and supports full recursion, the latter of which is needed to express common and complex problems in Big Data analytics. Consequently, Datalog is a prime candidate for a broad variety of research areas such as reasoning in the Semantic Web~\cite{DBLP:conf/icde/GottlobOP11} and Knowledge Graphs (KGs)~\cite{Gottlob:VadalogRecentAdv} amongst others. 

Despite its beneficial properties, standard Datalog is not powerful enough to express use cases in advanced reasoning scenarios. Thus, several extensions of the language, which add additional features to increase its expressive power, have been studied. There is an extensive body of work investigating various fragments and their complexity, e.g.,~\cite{DBLP:conf/lics/CaliGLMP10,DBLP:journals/jair/CaliGK13}. 
One noteworthy language is Warded \DatalogPm{}~\cite{DBLP:conf/aaai/CaliGP11}, which extends Datalog with a restricted form of existential quantification in rule heads. Because of its high expressive power and good computational properties, the fragment is used as the core logical reasoning language in the KG system Vadalog~\cite{Bellomarini:VadalogSystem}.
While Warded \DatalogPm{} has strong theoretical underpinnings (i.e., polynomial data complexity)~\cite{Gottlob:SpaceEfficientCore}, there is a vast number of additional features needed for data analytic tasks, for instance arithmetic and aggregation~\cite{Gottlob:VadalogRecentAdv}. A straightforward combination of Warded \DatalogPm{} with arithmetic immediately leads to undecidability of the language.

Several approaches for arithmetic in Datalog have been proposed. Prominent examples are an aggregation formalism by Ross and Sagiv~\cite{DBLP:conf/pods/RossS92} and Datalog$^{FS}$~\cite{DBLP:journals/vldb/MazuranSZ13} amongst others. However, all arithmetic formalisms fail to make the language powerful enough to support ontological query answering as needed, e.g., in KG reasoning. Furthermore, existing Datalog fragments that are capable of arithmetic are either undecidable or introduce strict restrictions to the language, limiting its expressive power. These problems (most notably undecidability) also carry over to systems implementing these theoretical formalisms, for instance as in LogicBlox~\cite{Aref:LogicBlox}, SociaLite~\cite{DBLP:journals/tkde/SeoGL15}, and DeALS~\cite{DBLP:conf/icde/ShkapskyYZ15}. Recently, Kaminski et al.\ have proposed limit \DatalogZ{}~\cite{DBLP:conf/ijcai/KaminskiGKMH17}, which is a decidable Datalog language with integer arithmetic. The authors also introduce a tractable fragment of their language and investigate extensions of the core language~\cite{DBLP:conf/aaai/KaminskiGKH20}. On the downside, limit \DatalogZ{} -- as all other arithmetic approaches proposed so far -- does not support existential quantifiers in rule heads and can therefore not be used, e.g., for ontological query answering. In general, to the best of our knowledge there exists no Datalog language that supports existential rule heads and arithmetic (and has viable complexity guarantees).

Clearly, there are two interesting lines of work: On the one hand, fragments that support existentials in rule heads (such as Warded \DatalogPm{}) and on the other hand, languages supporting arithmetic. In order to tackle complex data analytic tasks (such as KG reasoning) with efficient reasoning languages, it is crucial to bring these two lines of research together. Thus, we define a new language that closes this wide-open gap between \DatalogPm{} languages, specifically Warded \DatalogPm{}, and arithmetic. Our contributions lay the foundations for powerful Datalog fragments that can be used for complex reasoning incorporating arithmetic while maintaining efficiency guarantees. Naturally, a class of such languages has a broad range of applications in advanced reasoning and logic programming. Moreover, we prove results about the expressive power of decidable arithmetic in Datalog, which relates modern formalisms to deep theoretic notions.

\paragraph{Main Contributions.} Our results encompass the following main aspects:
 \begin{itemize}
 	\item We show \emph{descriptive complexity results} that prove the expressive power of decidable arithmetic in fixpoint logic programming languages, i.e., we show that limit \DatalogZ{} captures \conpcls{}.
     \item We propose a \emph{syntactic fragment} for decidable arithmetic in Datalog (denoted \textit{Bound \DatalogZ{}}). We show that we can leverage existing techniques for our purely syntactic fragment to obtain complexity results.
     \item We define a \emph{novel extension} of Warded \DatalogPm{} with arithmetic, called Warded Bound \DatalogZ{}.
     \item We prove \textit{tractability} of our language. Thereby, we prove the first complexity result for a Datalog language with arithmetic and existentials in rule heads.
     \item We give an \emph{efficient reasoning algorithm} with practical potential for our new \DatalogPm{} language.
 \end{itemize}

\paragraph{Organization.}
The rest of this work is organized as follows. We introduce the concepts and definitions needed throughout this work in Section \ref{sec:prelim}. In Section~\ref{sec:rel-work} we give an overview of the current research. Section~\ref{sec:expr-power} covers our descriptive complexity results of limit \DatalogZ{}. In Section~\ref{ssec:synt-fragment} we firstly introduce a syntactic arithmetic fragment, and then in Section~\ref{ssec:e-exst} we define our arithmetic extension of Warded \DatalogPm{}, give a reasoning algorithm, and prove its complexity. Finally, we conclude with a summary and a brief outlook on future research in Section~\ref{sec:conclusion}. We provide the proofs and additional material in the appendix.

\section{Preliminaries}\label{sec:prelim}
We assume that the reader is familiar with the basics of logic programming and its semantics~\cite{DBLP:books/sp/Lloyd87}. Furthermore, we assume familiarity with fundamental concepts and terms of complexity theory~\cite{DBLP:books/daglib/0023084}.
We write tuples $(t_1,\dots,t_n)$ as $\mathbf{t}$. By slight abuse of notation, we sometimes write $\mathbf{t}$ to indicate the set $\set{t_i \mid t_i \in \mathbf{t}}$ and use, e.g., $t_i \in \mathbf{t}$ and $|\mathbf{t}|$.
In Datalog we consider countably infinite, disjoint sets of object predicates, object variables, and object constants. The union of the set of variables and the set of constants is the set of \textit{terms}. An \textit{atom} is of the form $P(t_1, \dots, t_n)$ where $P$ is an $n$-ary predicate and $t_1,\dots,t_n$ are terms. Variable-free terms are called \textit{ground}. A \textit{rule} $r$ is a first-order (FO) sentence of the form $\forall \mathbf{x} \varphi \to \psi$ where $\mathbf{x}$ contains all variables in $\varphi$ and $\psi$ and $\varphi$ is called \textit{body} and $\psi$ is called the \textit{head} of the rule. The formula $\varphi$ is a conjunction of atoms and the head consists of one atom. We use \head($r$) to denote the atom in the head of $r$ and \body($r$) to denote set of body atoms of $r$. A \textit{fact} is a rule $\alpha$ with \body($\alpha$) = $\emptyset$ and \head($\alpha$) containing constants only. We write \vars($\alpha$) to denote the set of variables occurring in the atom $\alpha$. We define \frontier($r$) $=$ \vars($\varphi$) $\cap$ \vars($\psi$). The set of predicates in a set $\Sigma$ is denoted \sch($\Sigma$). A \textit{program} $\lprog$ is a set of rules.

\paragraph{\DatalogZ{}.} 
\DatalogZ{} extends the sets of predicates, variables, and constants from Datalog with numeric counterparts. Additionally, the set of predicates is extended with $\set{\leq, <}$. We distinguish between object and numeric variables, terms, and constants. We consider integers in $\Z$ and use $\set{+,-, \times}$ to denote the standard arithmetic functions over the standard ring $\Z$. A \textit{numeric term} is an integer, a numeric variable or an arithmetic term of the form $t_1 \circ t_2$ where $\circ \in \set{+,-, \times}$ and $t_1, t_2$ are numeric terms. Standard atoms are of the form $P(t_1,\dots,t_n)$ where $P$ is a standard predicate (of arity $n$) and each $t_i$ is a term. Comparison atoms have the form $t_1 \circ t_2$, where $t_1, t_2$ are numeric terms and $\circ \in \set{\leq, <}$ is a comparison predicate. We use sb($r$) to denote the standard body of $r$, consisting of a conjunction of all standard atoms in the body of $r$. We use syntactic shorthand notation $P(x) \doteq 0$ for $P(x) \leq 0 \land P(x) \geq 0$, $P(x \geq y)$ for $P(y \leq x)$, and $P(x \leq y \leq z)$ for $P(x \leq y) \land R(y \leq z)$.

It is straight forward to adapt the usual Datalog notions of interpretation and entailment to \DatalogZ{}. An interpretation $\ipret$ is a set of facts. $\ipret$ satisfies a ground atom $\alpha$, $\ipret \models \alpha$ if one of the following holds (i) $\alpha$ is a standard atom and $\ipret$ contains each fact obtained from $\alpha$ by evaluating the numeric terms in $\alpha$, or (ii) $\alpha$ is a comparison atom evaluating to true under the usual semantics of comparisons. This notion can be extended to conjunctions of ground atoms, rules, and programs as in FO logic. A model is an interpretation that satisfies all rules of a program: $\ipret \models \lprog$. We say $\lprog$ entails a fact, $\lprog \models \alpha$, if whenever $\ipret \models \lprog$ then $\ipret \models \alpha$ holds. 
We are specifically interested in Fact Entailment (\factent{}) as a fundamental reasoning problem for Datalog languages. 
\begin{definition}
\factent{} is the problem of deciding whether a database $D$ and a set of rules $\Sigma$ entail a fact $\alpha$, i.e., $(D \cup \Sigma) = \lprog \models \alpha$.
\end{definition}

\noindent
Note that there are several related problems, such as (Boolean) Conjunctive Query Evaluation (\bcqeval{}). It is easy to show that these problems are equivalent.

\paragraph{Warded \DatalogPm{}.}
A Tuple Generating Dependency $\sigma$ is a first order sentence of the form
$\forall \mathbf{x} \forall \mathbf{y} (\phi(\mathbf{x},\mathbf{y}) \to \exists \mathbf{z} \psi(\mathbf{x},\mathbf{z})),$
where \textbf{x},\textbf{y},\textbf{z} are tuples of variables and $\phi, \psi$ are conjunctions of atoms without constants and $\mathit{nulls}$. The set of existential variables in $\sigma$ is \vare($\sigma$) $=$ \vars($\psi$) $\setminus$ \frontier($\sigma$). 
For brevity, we typically omit quantifiers and replace conjunction symbols with commas. A TGD $\sigma$ is \textit{applicable} with respect to an interpretation $I$ if there exists a homomorphism $h$ s.t. $h($\body$(\sigma)) \subseteq I$. An interpretation $I$ satisfies a TGD $\sigma$, denoted $I \models \sigma$, if the following holds: If there exists a homomorphism $h$ in $\sigma$ s.t.\ $h(\phi(\mathbf{x},\mathbf{y})) \subseteq I$ then there exists a homomorphism $h' \supseteq h_{|\mathbf{x}}$ s.t.\ $h'(\psi(\mathbf{x},\mathbf{z})) \subseteq I$.
An interpretation $I$ satisfies a set of TGDs $\Sigma$, $I \models \Sigma$, if $I \models \sigma$ for each $\sigma \in \Sigma$. 
Given a database $D$ and a set of TGDs $\Sigma$, an interpretation $I$ is a \textit{model} of $D$ and $\Sigma$ s.t. $I \supseteq D$ and $I \models \Sigma$. 

Warded \DatalogPm{}~\cite{DBLP:conf/pods/ArenasGP14,Gottlob:SpaceEfficientCore} is an extension of \DatalogPm{}, which is a family of Datalog languages with existentials in rule heads.
Warded \DatalogPm{} restricts the syntax of TGDs to control the use of so-called \textit{dangerous} variables that can be unified with $\mathit{null}$ values in existential rule heads during program evaluation. The goal is to restrict the derivation of predicates with $\mathit{null}$ values, since it was shown that unrestricted use of dangerous variables has a direct relation to high complexity of reasoning~\cite{DBLP:journals/jair/CaliGK13}.
To define wardedness, we need several preliminary definitions. 
We define the set of \textit{positions} of a schema $S$, \pos($S$) as the set $\set{P[i] \mid P/n \in S, i \in [1,n]}$. We write \pos($\Sigma$) to denote the respective set over a set of TGDs $\Sigma$. The set of existential variables occurring in a TGD $\sigma$ is defined as \vare($\sigma$) $=$ \vars($\psi$) $\setminus$ \frontier($\sigma$).
The set of \textit{affected positions} of sch($\Sigma$), denoted \aff($\Sigma$) is defined inductively: If there exists a rule $\sigma \in \Sigma$ and a variable $x \in$ \vare($\sigma$) at position $\pi$, then $\pi \in$ \aff($\Sigma$), and if there exists a rule $\sigma \in \Sigma$ and a variable $x \in $ \frontier($\sigma$) in the body of $\sigma$ only at positions of \aff($\Sigma$) and variable $x$ appears in the head of $\sigma$ at position $\pi$, then $\pi \in$ \aff($\Sigma$). The set of \textit{non-affected positions} is defined as: $\text{\nonaff}(\Sigma) = \text{\pos}(\Sigma) \setminus \text{\aff}(\Sigma)$. 
Depending on the positions variables occur at in rules, and how the variables ``interact'' with $\mathit{nulls}$ and other variables, we can classify them into three stages. For a TGD $\sigma \in \Sigma$ and a variable $x \in$ \body($\sigma$): 
    (i) $x$ is \textit{harmless} if at least one occurrence of it is in \body($\sigma$) at a position $\pi \in$ \nonaff($\Sigma$)
    (ii) $x$ is \textit{harmful} if it is not harmless
    (iii) $x$ is \textit{dangerous} if it is harmful and $x \in$ \frontier($\sigma$).
\begin{definition}
A set $\Sigma$ of TGDs is \textit{warded} if either 
    (i) for each $\sigma \in \Sigma$, there are no dangerous variables in \body($\sigma$), or
    (ii) there exists an atom $\alpha \in $ \body($\sigma$), called \textit{ward}, s.t.\ all the dangerous variables in \body($\sigma$) occur in $\alpha$ and each variable of \vars($\alpha$) $\cap$ \vars(\body($\sigma$) $\setminus$ $\set{\alpha}$) is harmless.
\end{definition}

\paragraph{Descriptive Complexity.} Let us introduce some basic definitions from the area of finite model theory. A vocabulary $\tau = \set{R_1,...,R_n,c_1,...c_n}$ is a finite set of relation symbols with specified arity and constant symbols. A $\tau$-structure is a tuple $\mathfrak{A}=(A, R_1^{\mathfrak{A}},...,R_n^{\mathfrak{A}},c_1^{\mathfrak{A}},...c_n^{\mathfrak{A}})$, s.t. $A$ is a nonempty set, called the universe of $\mathfrak{A}$. Each $R_{i}^{\mathfrak{A}}$ is a relation over $A$ and each $c_{j}^{\mathfrak{A}} \in A$. A finite $\tau$-structure has a finite universe. Observe that a relational database is a finite relational structure. A Boolean query $Q$ is a mapping $Q: \mathcal{K} \rightarrow \set{0,1}$ that is closed under isomorphism.

\section{Related Work}\label{sec:rel-work}
Limit \DatalogZ{} was introduced by Kaminski et al.~\cite{DBLP:conf/ijcai/KaminskiGKMH17,DBLP:conf/aaai/KaminskiGKH20} and allows for decidable arithmetic in Datalog over integers. The main idea is to introduce several semantic restrictions for \DatalogZ{} to obtain decidability. First, the authors introduce a special notion of predicates (limit predicates) which allow to represent interpretations finitely (since interpretations over integers are infinite in general). Decidability is obtained by essentially disallowing multiplication, i.e., allowing linear arithmetic terms only. Finally, an efficient (polynomial time) fragment is defined by further restricting the language through preventing divergence of numeric values during the application of the immediate consequence operator used for reasoning. The results of Kaminski et al.\ allow for tractable arithmetic in \DatalogZ{}. However, the approach uses a rather implicit semantic notation, which may be cumbersome to use. We try to tackle these problems by introducing a purely syntactical formulation in Section~\ref{ssec:synt-fragment}. Moreover, as mentioned above, limit \DatalogZ{} does not support existential rule heads and is therefore too weak for ontological reasoning or KG reasoning. Based on the FO notion of limit \DatalogZ{},~\cite{DBLP:conf/lics/BurnORW21} proposed a generalization to high-order logic and defined a decidable fragment (i.e., decidable satisfiability).

The formalism introduced by Ross and Sagiv (RS)~\cite{DBLP:conf/pods/RossS92} based on monotonicity is similar to limit \DatalogZ{}. Contrary to other solutions, it deals with all four \textit{common aggregates}, i.e., sum, count, min, and max at once in a coherent manner.
The RS formalism suffers from the fact that checking monotonicity is undecidable. Moreover, that monotonicity does not imply decidability of reasoning (of \factent{}) in general. As all other known arithmetic languages, the RS-formalism does not support rules with existentials in heads.

\cite{DBLP:journals/vldb/MazuranSZ13} proposed \DatalogFS{} which adds frequency support goals to the language, that allow to count the occurrences of atoms that satisfy these special goals. The most critical downside of the \DatalogFS{} formalism by Mazuran et al.\ is clearly its undecidability (of \factent{}). Furthermore, it does not allow existential rule heads and is thus too weak for KG reasoning tasks. The authors do not introduce a decidable fragment and do not provide theoretical complexity proofs of their general language. 

Arithmetic and its complexity also play a vital role in related logic programming formalisms, such as the non-monotonic paradigm answer set programming (ASP) and tasks related to these logics. Recently, Eiter and Kiesel have introduced a new logic called ASP with Algebraic Constraints~\cite{DBLP:journals/tplp/EiterK20}. Their logic is based on Weighted Logic and Here-and-There (HT) logic and introduces constraints on the values of weighted rules. They consider extensions such as aggregation and prove complexity results, for instance a \conpcls{}-completeness result for the model checking problem.
In another recent work Lifschitz~\cite{DBLP:journals/tplp/Lifschitz21} tackles the problem of proving equivalences between ASP programs with integer arithmetic and introduces a deductive system to prove such equivalences.

\section{Expressive Power of Limit Arithmetic}\label{sec:expr-power}
In this section we present results about the expressive power of limit \DatalogZ{}, which answers an open question posed by the line of work of Kaminski et al. The authors showed that reasoning in limit \DatalogZ{} is \conpcls{}-complete in data complexity. Hence, an interesting question now is: Knowing that reasoning with limit \DatalogZ{} is \conpcls{}-hard, what is the expressive power gained by this arithmetic extension, or more generally: How much expressive power does decidable arithmetic add to (plain) Datalog?
Informally put, from the point of view of descriptive complexity we know that every limit \DatalogZ{} definable query is in \conpcls{} and some such queries are even \conpcls{}-hard in data complexity by the \conpcls{}-completeness result of limit \DatalogZ{}. The question now is whether limit \DatalogZ{} is powerful enough to define every \conpcls{} computable query. Our main theorem of this section shows that when allowing a mild form of negation (semi-positive rules), reasoning with limit \DatalogZ{} is indeed as powerful as \conpcls{}-computation.
\newcommand{\thmone}{
Let $Q$ be a Boolean query. The following are equivalent:
\begin{itemize}
    \item[(i)] $Q$ is computable in \conpcls{}
    \item[(ii)] $Q$ is definable in limit \DatalogZ{}
\end{itemize}
That is, limit \DatalogZ{} = \conpcls{}.
}
\begin{theorem}\label{thm:desc-compl-main}
\thmone
\end{theorem}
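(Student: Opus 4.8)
I would restrict throughout to finite ordered structures (a linear order in the vocabulary, with constants for its least and greatest element) — the standard setting for capturing results, and necessary here, since without an order even plain Datalog fails to express parity. Under this convention: if $Q$ is definable in limit \DatalogZ{}, fix a program $\lprog$ and a ground goal $g$ with $Q(\mathfrak A)=1$ iff $\lprog\cup D_{\mathfrak A}\models g$, where $D_{\mathfrak A}$ is the canonical encoding of $\mathfrak A$. Since $\lprog$ is fixed, this is a data-complexity instance of \factent{}, which Kaminski et al.\ showed to be in \conpcls{} for the decidable fragment. The one subtlety is semi-positive negation: such a program is a two-stratum stratified program whose first stratum merely forms complements of the (\textsc{edb}) input predicates — computable in \pcls{} for a fixed schema and enlarging the database only polynomially — after which a positive limit \DatalogZ{} program remains, to which the \conpcls{} procedure applies verbatim. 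Closure under isomorphism is immediate, as the construction never refers to individual universe elements.

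\textbf{Completeness, (i) $\Rightarrow$ (ii).} This is the new content. The cleanest route builds on the existing \conpcls{}-hardness of \factent{}: Kaminski et al.'s hardness reduction, composed with the translation from the canonical structure encoding, exhibits a fixed (semi-positive) limit \DatalogZ{} program that defines a \conpcls{}-complete Boolean query $Q_0$. It then suffices to show that limit \DatalogZ{} with semi-positive rules, over ordered structures, is closed under first-order reductions: given any \conpcls{} query $Q$, pick a quantifier-free or first-order reduction from $Q$ to $Q_0$, implement it inside the program — this is possible because first-order queries are expressible by semi-positive Datalog on ordered structures, and composing this prefix leaves the limit machinery untouched — and obtain a program defining $Q$. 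An alternative, more self-contained route is to simulate a co-nondeterministic polynomial-time machine $M$ directly: the deterministic verifier part is the textbook ``semi-positive Datalog simulates polynomial time on ordered structures'' construction (time steps and tape cells indexed by tuples of universe elements, transitions as Horn rules, semi-positive negation on \textsc{edb}/order for the frame conditions); the co-nondeterminism is supplied by a limit predicate that the rules constrain only from below, so that under the all-models semantics of $\models$ it behaves like a universally quantified second-order variable ranging over certificates (the Fagin-style origin of the \conpcls{}), its bits being recovered by arithmetic and fed into the transition rules; one derives $g$ on reaching $q_{\mathrm{acc}}$, so that $\lprog\cup D_{\mathfrak A}\models g$ iff $M$ accepts on every certificate iff $Q(\mathfrak A)=1$. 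Either route uses only semi-positive negation.

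\textbf{Main obstacle.} The effort in the completeness direction concentrates on engineering the ``universal guess''. In the reduction route, one must verify carefully that limit \DatalogZ{} with semi-positive rules on ordered structures really is closed under first-order reductions, and recast Kaminski et al.'s hardness argument as defining a bona fide complete Boolean query under the standard encoding. In the direct route, the difficulty is that limit \DatalogZ{} predicates carry at most one numeric argument and limit predicates have monotone (downward- or upward-closed) interpretations, so a ``free'' limit predicate cannot be read off at its exact value without negation; encoding the certificate bits so that they are nonetheless read \emph{positively}, and arranging that ill-formed or ``inflated'' guesses neither falsify nor trivialize entailment, requires care, as does verifying that the whole program stays inside the decidable, linear-arithmetic fragment rather than full \DatalogZ{}. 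The Turing-machine simulation and the order arithmetic are routine adaptations of standard material; it is the interaction between limit-predicate monotonicity, the all-models semantics of entailment, and the restricted negation that needs the real work.
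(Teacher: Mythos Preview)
Your soundness direction matches the paper's: both appeal to the \conpcls{} upper bound for fact entailment, and you are right that semi-positive negation is what is needed and that it does not disturb membership.

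For completeness the paper takes a direct Turing-machine simulation, but the mechanism for the universal quantification over certificates is \emph{not} the one you sketch. You propose an unconstrained limit predicate whose value, under the all-models reading of $\models$, acts as a universally quantified certificate; you then correctly flag the obstacle that limit monotonicity prevents reading the exact threshold positively, and leave that unresolved. The paper avoids this obstacle by a different device. It works with the \npcls{} machine $M$ for the complement and encodes the guess string of a path $\pi$ as an integer $g(\pi)$ (prepend a leading $1$ to the bit string and read it in binary), carrying $g(\pi)$ in the numeric slot of max-limit configuration predicates $\mathit{head}_q(\mathbf t,\mathbf s,m)$ and $\mathit{tape}_u(\mathbf t,\mathbf s,m)$. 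A $0$-guess step is modelled by $m\mapsto m+m$, a $1$-guess by $m\mapsto m+m+1$, so the forward rules derive configuration facts for \emph{all} paths simultaneously inside the least model. The ``for all paths'' check is then handled by a separate max-limit predicate $\mathit{reject}(\mathbf t,m)$, seeded at rejecting halting configurations and back-propagated toward the root via the inverse relations $m'=2m$ and $m'=2m+1$; the goal $\mathit{confirm}$ is derived when $\mathit{reject}$ reaches the initial configuration with value $1$. No free predicate and no appeal to the all-models semantics is used --- the arithmetic on $g(\pi)$ does the enumeration of the guess space.

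Two further remarks. First, on order: you restrict to ordered structures from the outset, whereas the paper says, somewhat loosely, that the linear order can be ``guessed non-deterministically''; your treatment is the cleaner one here. Second, your alternative reduction route (closure of semi-positive limit \DatalogZ{} under first-order reductions from a fixed \conpcls{}-complete query) is not pursued in the paper and would be a genuinely different argument; it is plausible, but the uniformisation of Kaminski et al.'s hardness instance into a single Boolean query and the closure verification are not immediate and would need to be written out. Your all-models idea also has a pitfall you do not name: in the minimal model an unconstrained limit predicate is empty, so if the acceptance rules need a certificate value to fire at all, the goal fails in that model and is therefore never entailed --- you would have to force the certificate to be inhabited without pinning its value, which interacts badly with the very monotonicity issue you already identified.
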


\noindent
Recall that Fagin's seminal theorem states that the existential fragment of second order logic $\exists SO$ captures \npcls{} computation~\cite{fagin1974generalized}. Thus, together with Fagin's theorem, Theorem~\ref{thm:desc-compl-main} immediately implies the following Corollary, since the universal fragment of second-order logic, $\forall$SO, is the complement of $\exists$SO and therefore captures \conpcls{}. This corollary highlights the relationships between complexity, logic, and arithmetic in Datalog.

\newcommand{\corrone}{
Limit \DatalogZ{} = $\forall$SO = \text{\conpcls{}}.
}
\begin{corollary}\label{cor:limit-dz=uso}
\corrone
\end{corollary}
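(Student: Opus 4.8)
The plan is to derive the corollary directly from Theorem~\ref{thm:desc-compl-main}, which has already established the central equivalence, combined with the classical descriptive-complexity characterisation of \conpcls{} by the universal fragment of second-order logic. No fresh model-theoretic construction is needed; the work is entirely a chaining of two equalities of query classes, so the proof should be short.

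First I would invoke Fagin's theorem in its standard form: $\exists$SO captures \npcls{}, i.e.\ a Boolean query $Q$ is definable by an existential second-order sentence precisely when $Q$ is computable in \npcls{}. I would then take complements. Since a Boolean query is closed under isomorphism, its complement $\overline{Q}$ (mapping a structure to $1$ exactly when $Q$ maps it to $0$) is again a well-defined Boolean query. A sentence of $\forall$SO is, by definition, the negation of an $\exists$SO sentence, so $Q$ is $\forall$SO-definable iff $\overline{Q}$ is $\exists$SO-definable iff $\overline{Q} \in \npcls{}$ iff $Q \in \conpcls{}$. This yields the textbook identity $\forall\text{SO} = \conpcls{}$, which I would state as the first of the two equalities.

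Second I would apply Theorem~\ref{thm:desc-compl-main}, already proved above, which gives limit \DatalogZ{} $=$ \conpcls{} as classes of Boolean queries: a Boolean query is definable in limit \DatalogZ{} exactly when it is computable in \conpcls{}. Transitivity of equality of query classes then immediately gives
\begin{equation*}
\text{limit }\DatalogZ{} = \conpcls{} = \forall\text{SO},
\end{equation*}
which is precisely the statement of the corollary. I would briefly note that all three classes are compared as sets of Boolean queries over the same signatures, so the equalities compose without any coercion between different notions of ``query''.

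There is essentially no hard step here; the only point requiring care is the complementation argument, where one must be explicit that ``$\forall$SO is the complement of $\exists$SO'' means closure of the \emph{query class} under complement rather than a syntactic manipulation of a single formula, and that \conpcls{} is likewise \emph{defined} as the complement class of \npcls{}. Provided these two complement operations are aligned, the corollary follows at once. I would therefore keep the proof to a few lines, citing Fagin's theorem~\cite{fagin1974generalized} for $\exists\text{SO} = \npcls{}$ and Theorem~\ref{thm:desc-compl-main} for the arithmetic half.
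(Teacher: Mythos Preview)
Your proposal is correct and follows essentially the same approach as the paper: both derive the corollary by combining Theorem~\ref{thm:desc-compl-main} (limit \DatalogZ{} $=$ \conpcls{}) with the complementation of Fagin's theorem ($\forall$SO $=$ \conpcls{}), and then chain the two equalities. Your write-up is more explicit about the complementation step, but the underlying argument is identical.
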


\section{Arithmetic in Warded \DatalogPm{}}
In this section we define our arithmetic extension of Warded \DatalogPm{} and discuss an efficient reasoning algorithm for our new language.

\paragraph{Negative Results.}
We prove that Warded \DatalogPm{} extended with integer arithmetic (and even with limit arithmetic) is undecidable in general. Hence, we cannot simply use arithmetic in Warded \DatalogPm{} and get decidability. Another point of view is that in general wardedness does not reduce the complexity of arithmetic in \DatalogPm{} per se. 

\newcommand{\thmthree}{
	\bcqeval{} for Warded \DatalogPm{} extended with integer arithmetic is undecidable.
}
\begin{theorem}\label{thm:negative-res}
\thmthree
\end{theorem}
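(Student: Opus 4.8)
The plan is to show that the source of undecidability is the arithmetic alone, and that wardedness offers no protection against it. The first step is to observe that a rule set that uses no existential quantifiers in heads is \emph{vacuously} warded: with no existential variables anywhere, the base case of the inductive definition of the affected positions never applies, so there are no affected positions, hence no harmful and in particular no dangerous body variables, so condition~(i) of the wardedness definition holds for every rule. Consequently every ordinary \DatalogZ{} program (i.e., Datalog with integer arithmetic and without existential rule heads) is in particular a program of Warded \DatalogPm{} extended with integer arithmetic, and it suffices to prove undecidability for this existential-free, arithmetic-only fragment; the general statement then follows a fortiori.

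For the undecidability itself I would reduce from the halting problem of deterministic two-counter (Minsky) machines, which is undecidable. Given such a machine $M$ with states $q_0,\dots,q_f$, construct a program $\lprog_M$ over a ternary predicate $\mathsf{Conf}$ whose first argument is a state constant and whose remaining two arguments are numeric: include the fact $\mathsf{Conf}(q_0,0,0)$; for every increment instruction ``at $q$ increment counter $1$ and go to $q'$'' a rule $\mathsf{Conf}(q',x+1,y)\leftarrow\mathsf{Conf}(q,x,y)$ (and symmetrically for counter $2$); for every test-and-decrement instruction ``at $q$: if counter $1$ is zero go to $q'$, else decrement it and go to $q''$'' the two rules $\mathsf{Conf}(q',0,y)\leftarrow\mathsf{Conf}(q,0,y)$ and $\mathsf{Conf}(q'',x-1,y)\leftarrow\mathsf{Conf}(q,x,y),\ 1\le x$, using a comparison atom to guard the else-branch; and finally a rule $\mathsf{Halt}\leftarrow\mathsf{Conf}(q_f,x,y)$. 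Every arithmetic operation used here is available in \DatalogZ{}, and $\lprog_M$ has no existential head variables, hence is warded. Since $M$ is deterministic, the $\mathsf{Conf}$-atoms entailed by $\lprog_M$ are exactly the configurations of the unique run of $M$ (counters never become negative because of the $1\le x$ guard, so no spurious configurations are produced), and therefore $\lprog_M\models\mathsf{Halt}$ iff $M$ halts. As \factent{} and \bcqeval{} are interreducible (the query ``is $\mathsf{Halt}$ derivable'' being an atomic Boolean query), this yields undecidability of \bcqeval{} for Warded \DatalogPm{} with integer arithmetic.

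An alternative, equally short reduction goes from Hilbert's tenth problem: for a polynomial $p$ over $\Z$, generate all integers via $\mathsf{Int}(0)$, $\mathsf{Int}(x+1)\leftarrow\mathsf{Int}(x)$, $\mathsf{Int}(x-1)\leftarrow\mathsf{Int}(x)$ and add the goal $\mathsf{Sol}\leftarrow\mathsf{Int}(x_1),\dots,\mathsf{Int}(x_n)$ together with comparison atoms expressing $p(x_1,\dots,x_n)=0$; then $\lprog\models\mathsf{Sol}$ iff $p$ has an integer root. This variant additionally leans on multiplication inside numeric terms but is not needed for the theorem. Either way there is no real technical obstacle: the wardedness bookkeeping is free by the first step, and the only load-bearing point is to verify faithfulness of the chosen encoding — that no unintended $\mathsf{Conf}$ or $\mathsf{Sol}$ facts are entailed and that \factent{} over the (in general infinite) \DatalogZ{} interpretations really coincides with halting, respectively with having a solution. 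The conceptual content of the theorem is precisely that wardedness constrains the propagation of nulls, not the expressive power of integer arithmetic.
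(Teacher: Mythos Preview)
Your proposal is correct. The paper's own proof takes your \emph{alternative} route: a reduction from Hilbert's tenth problem (in its non-negative variant), generating the naturals via the fact $A(0)$ and the rule $A(x)\to A(x+1)$, and then asking whether $\bigwedge_i A(x_i)\land P(x_1,\dots,x_n)\doteq 0\to B()$ ever fires. Your preliminary observation that an existential-free rule set is vacuously warded (no existential variables $\Rightarrow$ no affected positions $\Rightarrow$ no dangerous variables) is exactly the right point and is left implicit in the paper.

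Your primary route via deterministic two-counter machines is a genuinely different and, in one respect, sharper argument: it needs only increment, guarded decrement, and a zero test, so no multiplication appears anywhere in the encoding. The paper's Hilbert reduction, by contrast, relies on multiplication inside the polynomial comparison atom. Since the paper immediately follows the theorem with a corollary claiming undecidability even for \emph{linear}/limit arithmetic, a multiplication-free encoding like yours is arguably the more directly reusable one (modulo the orthogonal issue that limit predicates have bound rather than exact semantics). The trade-off is that the Hilbert reduction is a couple of lines and needs no simulation-faithfulness argument, whereas for the Minsky encoding one must check, as you do, that the $1\le x$ guard prevents spurious negative-counter configurations from being derived.
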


\noindent
As corollary of this theorem note that our proof also holds for extensions of Warded \DatalogPm{} with limit arithmetic and for fragments like piece-wise linear Warded \DatalogPm{}~\cite{Gottlob:SpaceEfficientCore}.

\begin{corollary}
\bcqeval{} for (piece-wise) linear Warded \DatalogPm{} extended with limit arithmetic is undecidable.
\end{corollary}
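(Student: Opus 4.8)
The plan is to revisit the reduction that establishes Theorem~\ref{thm:negative-res} and check, step by step, that it already lives inside the narrower fragment named in the corollary. Recall that Theorem~\ref{thm:negative-res} is obtained by simulating a Turing machine $\tmm$ inside Warded \DatalogPm{} with integer arithmetic: the tape cell currently scanned is tracked by a numeric position argument, the head moves one cell at a time (so $\hleft,\hright,\hstay$ act on that argument as $-1,+1,0$), and the finite control together with the existential apparatus of Warded \DatalogPm{} threads the configuration from one step to the next, with $\bcqeval{}$ on a designated goal atom $\alpha$ encoding ``$\tmm$ halts''. The first observation is that this simulation only ever uses increments, decrements, and comparisons of numeric terms — it never multiplies two numeric terms — so every numeric term occurring in the reduction is \emph{linear}. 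Hence the undecidability witness is already a program over linear arithmetic, which is exactly the ambient setting of limit \DatalogZ{}.

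Second, I would verify that the warded program produced by the reduction is \emph{piece-wise linear} in the sense of~\cite{Gottlob:SpaceEfficientCore}: the recursion that can carry $\mathit{nulls}$ is restricted so that each rule body contains at most one atom in which a null-carrying (affected) variable occurs. In the encoding the step-by-step recursion is driven by a single ``configuration'' atom that is read once per rule and rewritten into the next configuration; the object positions of that atom are populated from the finite description of $\tmm$ and of the input, not re-injected from existential heads along a cycle, and the auxiliary nulls created per step are consumed immediately rather than re-entering affected positions of later steps. Consequently the very same program witnesses undecidability for piece-wise linear Warded \DatalogPm{} with integer arithmetic, and a fortiori for (ordinary) linear Warded \DatalogPm{}.

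Third — and this is the delicate point — I would recast the numeric predicates of the reduction as \emph{limit predicates}, so that the argument also applies to limit arithmetic rather than unrestricted integer arithmetic. The subtlety is that a limit predicate carries an implicit monotone (upward or downward) closure in its numeric argument, so blindly replacing an ordinary numeric predicate by a limit one can make spurious ``smaller/larger'' facts derivable and break the one-to-one correspondence between derivations and machine steps. The standard fix is to store the exact value of each relevant counter or position by a pair of limit predicates — one max-limit and one min-limit — and to write every transition rule so that it reads only the \emph{limit value} (which, thanks to the matching pair, is precisely the stored integer) and writes a new limit value, using guards of the form $x \doteq y$ wherever equality of positions must actually be tested. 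One then argues, mirroring the proof of Theorem~\ref{thm:negative-res}, that under limit semantics $\lprog \models \alpha$ holds iff $\tmm$ halts.

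The main obstacle I anticipate is exactly this last reconciliation: ensuring that the monotone closure built into limit predicates cannot ``leak'' into the finite control or into the halting test — i.e., that no extra model of the limit program satisfies (or falsifies) the goal $\alpha$ beyond those intended by the machine simulation. Once the numeric arguments are pinned by matching min-/max-limit predicates and every transition is phrased to be insensitive to the closure, the remaining work — confirming that wardedness, linearity, and piece-wise linearity are all preserved by this rephrasing, and that $\bcqeval{}$ still captures halting — is routine bookkeeping inherited from the proof of Theorem~\ref{thm:negative-res}.
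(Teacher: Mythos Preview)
Your plan rests on a mistaken premise: the paper does \emph{not} prove Theorem~\ref{thm:negative-res} by simulating a Turing machine. The actual reduction is from Hilbert's tenth problem and consists of two rules plus one fact: take $D=\{A(0)\}$, the rules $A(x)\rightarrow A(x+1)$ and $\bigwedge_i A(x_i)\wedge P(\mathbf{x})\doteq 0 \rightarrow B()$ for a given integer polynomial $P$, and the goal $B()$. Because this program contains \emph{no existential quantifiers whatsoever}, there are no affected positions and no dangerous variables, so it is trivially warded and trivially piece-wise linear---no bookkeeping is required. For the limit-arithmetic version one simply declares $A$ to be a (say, max-)limit predicate; the rule $A(x)\rightarrow A(x+1)$ together with the limit closure then makes $A(k)$ hold for every integer $k$, so the body of the second rule ranges over all integer tuples and $B()$ is entailed iff $P$ has an integer root. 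That is the whole argument behind the corollary.

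Your proposed route---a Turing-machine encoding using only increments and decrements, followed by pinning exact values with paired min/max limit predicates---is therefore not what you can ``inherit'' from Theorem~\ref{thm:negative-res}, and it also aims at a stronger claim (undecidability with \emph{linear} limit arithmetic) that is in tension with the rest of the paper: IDB limit predicates carry only a single numeric argument, so one cannot freely index by both time and tape position, and Theorem~\ref{thm:warded-bdz-ptime-algorithm} shows that the natural type-consistent combination of warded existentials with linear bound/limit arithmetic is in fact decidable. The undecidability in the corollary is driven by the \emph{multiplication} hidden in the polynomial $P$, which limit arithmetic (absent the linear restriction) still permits; your min/max-pair machinery and the piece-wise-linearity analysis are unnecessary for this statement.
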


\subsection{A Syntactic Arithmetic Fragment}\label{ssec:synt-fragment}
Before defining our Warded \DatalogPm{} extension we firstly consider a syntactic fragment of \DatalogZ{}, extending the limit notions of Kaminski et al.~\cite{DBLP:conf/ijcai/KaminskiGKMH17}. We show that we can exploit the techniques of limit arithmetic for our syntactic fragment to establish fundamental complexity results for reasoning tasks.

\paragraph{Syntax.}\label{def:bound-datalogz-syntax} We define Bound \DatalogZ{} as extension of \DatalogZ{} with a set of additional \textit{bound} operators: $\rho(x) \in \set{max(x), min(x)}$ where $x$ is a numeric term. Numeric predicates are either exact or bound numeric, where the last position contains a bound operator $\rho$. All exact numeric predicates are EDB. A rule $r$ is a \textit{Bound} \DatalogZ{} rule, if:
\begin{itemize}
    \item[(i)] \body($r$) = $\emptyset$, or 
    \item[(ii)] each atom in sb($r$) is object, exact numeric or a bound atom and \head($r$) is an object or a bound atom. 
\end{itemize}
In general, a Bound \DatalogZ{} rule is written as: $\varphi \rightarrow \psi(\mathbf{t}, \rho(a)).$ A Bound \DatalogZ{} program consists of Bound \DatalogZ{} rules only.

\paragraph{Semantics.} Intuitively, we use designated \textit{bound operators} to only store the upper or lower bounds of integer values of a predicate in interpretations and define that the other values are implied implicitly. That is, a bound operator $A(\mathbf{t},\rho(x))$ has the meaning that the value $x$ of $A(\mathbf{t}, x)$ is at least $x$ if $\rho$ is $max$ or at most $x$ if $\rho$ is min. However, we do not enforce the semantics for all facts containing predicate $A$.
For instance, fact $A(\mathbf{t}, min(3))$ says that $A$ for the tuple of objects $\mathbf{t}$ has value at most 3, as $A(\mathbf{t}, min(3))$ implies that the facts $A(\mathbf{t}, min(k))$ also hold for $k\geq 3$ in interpretations (see Example \ref{ex:bounded-datalogz}). Standard notions of limit \DatalogZ{}, such as interpretations, rule applicability of arithmetic rules, and linear arithmetic (essentially disallowing multiplication) can easily be extended to bound programs. 
It is straight forward to show that the complexity results carry over to our fragment. Note that this syntactical version can express all queries limit \DatalogZ{} can.
\begin{example}\label{ex:bounded-datalogz}
Bound \DatalogZ{} allows users to express the computation of the shortest paths from a dedicated start vertex with the following two rules (assuming a predicate $edge$ encoding the graph): 
\begin{align}
\mathit{Path}(start, min(0)). \\
\mathit{Path}(v, min(x)), \mathit{Edge}(v,w,min(y))  \nonumber \\
\rightarrow \mathit{Path}(w, min(x+y)).
\end{align}
The semantics follows the intuition that the existence of a shortest path to $v$ of cost at most $x$ $(\mathit{Path}(v, min(x)))$ implies the existence of a path for all $k \geq x$.
\end{example}

\noindent
Now that we have constructed a syntactic fragment capable of integer arithmetic, we have a good setup in terms of arithmetic in our language.
We now want to extend Warded \DatalogPm{} (which does a priori not allow arithmetic) with our syntactic bound fragment. Thus, we leverage techniques from Warded \DatalogPm{} to handle reasoning with existentials in rule heads and prove the termination of programs.

\paragraph{Warded Bound \DatalogZ{}.} 
In our language, object and numeric variables are being separated in the sense that predicates have arguments that are either all (tuples of) objects ($P(\mathbf{t})$), or objects with the last position being numeric ($P(\mathbf{t}, x)$). Object predicates may contain $\mathit{null}$ values and numeric predicates may be exact predicates (EDB) or contain arithmetic expressions, which are only allowed in bound operators $min$ or $max$.
\begin{definition}
Warded Bound \DatalogZ{} is Bound \DatalogZ{} extended with existentials in the rule heads such that the following conditions hold:
\begin{itemize}
    \item[(i)] Existentially quantified variables may only appear in object positions
    \item[(ii)] Each rule is warded (thus we extend wardedness by numeric variables and atoms accordingly)
\end{itemize}
\end{definition}

\noindent
Observe that we allow predicates containing both existentially quantified variables and arithmetic terms in rule heads, e.g.: $P(a, 3) \rightarrow \exists \nu P(\nu, 3)$.
As highlighted above, our language contains Warded \DatalogPm{} which directly implies that it is capable of efficient Knowledge Graph reasoning, thus it supports all queries Warded \DatalogPm{} can express. The expressive power of Warded \DatalogPm{} has been argued in detail in~\cite{Gottlob:SpaceEfficientCore}. The following example showcases the use of our language for advanced queries in KG reasoning systems.

\begin{example}{\textbf{(Family Ownership)}}\label{ex:warded-bound-datalogz_main_example}
Detecting connections between ownership relations of company shares owned by families is a main use case in financial KG applications~\cite{DBLP:conf/icde/BellomariniFGS19}. The relation $\mathit{family}$ is assumed to be incomplete. We use the following predicates: 
$\mathit{Person}(p, \mathbf{x})$: $p$ is a person with property vector $\mathbf{x}$, $\mathit{Family}(f, p)$: person $p$ belongs to family $f$, $\mathit{Asset}(a,c,v)$: an asset $a$ of a company $c$ whose (absolute) value is $v$, $\mathit{Right}(o,a,w)$: $o$ has right on a number $w$ of shares of asset $a$, and $\mathit{Own}(f,a,w)$: a family $f$ owns a number $w$ of shares of asset $a$.

\begin{align}
    \mathit{Person}(p, \mathbf{x}) \rightarrow& \exists f \mathit{Family}(f, p).& \label{running-ex:r1} \\
    \mathit{Person}(p,\mathbf{x}), Per&son(p',\mathbf{x}), &\nonumber \\
    \mathit{Family}(f,p) \rightarrow& \mathit{Family}(f,p').& \label{running-ex:r2}\\
    \mathit{Right}(o, a, w), As&set(a,c,v), \mathit{Right}(c, a', w')& \nonumber \\
    \rightarrow& \mathit{Right}(o,a', max(w + v + w')). &\label{running-ex:r3}\\
    \mathit{Right}(p,a,w), \mathit{Fa}&mily(f,p), \mathit{Own}(f,a,x)& \nonumber \\
    \rightarrow& \mathit{Own}(f,a,max(w+x)).&\label{running-ex:r4}
\end{align}
Rule~\ref{running-ex:r1} defines implicit knowledge that if a person with some $\mathbf{x}$ exists, then there exists a family relation which contains $p$. The pre-computed information in $\mathbf{x}$ is used to identify related people. In practice, this tasks could be delegated to other KG components (\cite{DBLP:conf/icde/BellomariniFGS19}). However, we use only language constructs for which we have complexity guarantees here. Rule~\ref{running-ex:r2} is used to check whether the families of two persons coincide. Rule~\ref{running-ex:r3} defines transitivity on the rights of an asset. Finally, Rule~\ref{running-ex:r4} expresses the total ownership of a family based on the rights of its members. 
\end{example}

\noindent
A simple consequence operator-like procedure for ordinary Bound \DatalogZ{} arithmetic may not terminate due to potential divergence of arithmetic terms. This would be the case, for instance, if a program contained the rule $A(max(x)) \rightarrow A(max(x+1))$. Thus, to use an iterative approach, we need to ensure that the arithmetic expressions may not diverge by restricting rules to be \textit{stable}, as proposed in~\cite{DBLP:conf/ijcai/KaminskiGKMH17}. Stability enforces a kind of mild monotonicity to the rules s.t.\ divergence can be avoided. However, checking stability is undecidable, hence we use the notion of type-consistent programs and assume that our programs are type-consistent, defined as follows.

\begin{definition}\label{def:type-consistency}
A Warded Bound \DatalogZ{} rule $\sigma$ is type-consistent if the following conditions hold:
\begin{enumerate}
    \item[(i)] Each numeric term is of the form $k_0 + \sum_{i=1}^{n} k_i \times m_i$ where $k_0$ is an integer and each $k_i$, $i \in [1,n]$ is a non-zero integer
    \item[(ii)] If \head($\sigma$) = $A(\mathbf{t}, \rho(\ell))$ then each variable in $\rho(\ell)$
    with a positive (negative) coefficient $k_i$ occurs also in a unique bound atom of $\sigma$ that is of the same (different) type, i.e., min or max, as $\rho$.
    \item[(iii)] For each comparison predicate $(t_1 < t_2)$ or $(t_1 \leq t_2)$ in $\sigma$, each variable in $t_1$ with positive (negative) coefficient also occurs in a unique min (max) operator in a body atom and each variable in $t_2$ with a positive (negative) coefficient also occurs in max (min) operator in a body atom of $\sigma$.
\end{enumerate}
\end{definition}

\noindent
Type-consistency introduces purely syntactical and local restrictions to the rules. Most notably, it ensures that the value in the head of a rule of a numeric variable $x$ behaves analogously to the value in the body occurrence of $x$. That is, if the head contains a variable $x$ inside a bound operator $\rho$, then $x$ also appears in the body in a bound operator of the same type as $\rho$ (min or max). It was shown that type-consistency implies stability and that type-consistency can be checked efficiently in \logspacecls{}~\cite{DBLP:conf/ijcai/KaminskiGKMH17}.

\subsection{Reasoning Algorithm}\label{ssec:e-exst}

\begin{algorithm}[t]
    \SetAlgoLined
    \SetAlgoNoEnd
    \SetKwInput{KwInput}{Input} 
	\KwInput{A Warded Bound \DatalogZ{} program $\lprog = D \cup \Sigma$, a fact $\alpha$}
	\KwResult{True if $\lprog \models \alpha$}
	$\pipret := \emptyset$\;
	\Repeat{$\pipret_{curr} = \pipret$}{
	    $\pipret := \pipret_{curr}$\;
    	$\mathit{update}(\mathcal{G}_{\lprog}^{\pipret})$\; \label{alg:line-update-vpg}
    	\ForEach{$\sigma \in \Sigma$}{
    	    \If{($\gamma$ = $\mathit{checkApplicable}(\sigma, \pipret_{curr}))$ != $\mathit{NULL}$}{\label{alg:line-check-exists-trigger} 
    	        \If{$\mathit{checkTermination}(\gamma)$}{\label{alg:line-check-termination}
    	            $\pipret_{curr} := \pipret_{curr} \cup \gamma$\;\label{alg:line-add-fact} 
    	        }
    	    }
    	}
	}\label{alg:line-check-interpret-match}
	\Return true if $\pipret \models \alpha$; \label{alg:line-model-check}
	\caption{Algorithm for Warded Bound \DatalogZ{}}\label{algo:chase-for-wbldz}
\end{algorithm}

We are ready to present the main reasoning algorithm, depicted in Algorithm~\ref{algo:chase-for-wbldz}. Note that our procedure is an extension of the general algorithm for Warded \DatalogPm{}, extending ideas from Bellomarini et al.~\cite[Algorithm 1]{Bellomarini:VadalogSystem}. In essence, this algorithm is a forward chaining procedure, which derives new facts from the given program step by step in an iterative way. When deriving a new fact $\gamma$, we must pay attention to two issues: (i) we must avoid divergence of numeric terms and (ii) we have to restrict the propagation of $\mathit{null}$ values for rules with existentials to avoid high complexity. 
With respect to issue (i) we apply the value propagation graph method proposed in~\cite{DBLP:conf/ijcai/KaminskiGKMH17} to keep track of the arithmetic computations. The value propagation graph $\mathcal{G}_{\lprog}^{\pipret}$ of a program w.r.t.\ an interpretation contains a node for each bound atom of the interpretation and contains edges between body and head atoms of a rule. The edge labels encode how numerical values are propagated from body to head atoms. Thus, the edge labels can be used to check if numerical terms diverge or not when a particular rule is applied. In each iteration of the main loop of our algorithm, we update the value propagation graph (adding new nodes and edges and updating edge weights). The update of the graph in Line \ref{alg:line-update-vpg} replaces the diverging arithmetic terms with the symbol $\infty$ in the current interpretation. 

After this step, we begin to derive new facts by applying the rules (similar to an immediate consequence computation). Thus, for each rule in the input program, we check if its applicable and if so, we try to derive a new fact from it. Because of issue (ii), this step requires an additional termination check since we cannot uncontrollably derive new $\mathit{nulls}$. Therefore, we use the termination check of wardedness (\textit{checkTermination} as proposed in~\cite{Bellomarini:VadalogSystem}) to check if we can safely derive the current fact. Intuitively, this termination procedure uses several guiding structures, such as warded forests, to detect isomorphisms between facts. We refer the reader to~\cite{Bellomarini:VadalogSystem} for more details on the termination procedure. This mechanism allows us to avoid derivation of superfluous facts. It takes a fact as input and checks, using the underlying data structures, if no isomorphism for the fact to derive exists in the current derivation graph of the program. If so, we add the newly derived fact to the interpretation $\pipret_{curr}$ that is currently being computed (Line \ref{alg:line-add-fact}). Finally, if no new facts have been derived, i.e., the interpretation that has been computed in the current iteration $\pipret_{curr}$ is equal to the previously computed one ($\pipret$), we terminate the reasoning procedure (Line \ref{alg:line-check-interpret-match}), do a single model check on the fact given as input, and return the result of the check in Line \ref{alg:line-model-check}.

\begin{algorithm}[t]
	\SetAlgoLined
    \SetAlgoNoEnd
	\SetKwInput{KwInput}{Input} 
	\KwInput{Rule $\sigma$, interpretation $\pipret$}
	\KwResult{True if $\sigma$ is applicable to $\pipret$}
	Construct linear integer constraint, $\mathcal{C(\sigma, \pipret)}$\;\label{alg2:line-constr-C}
	\If{$\mathcal{C(\sigma, \pipret)}$ has an integer solution}{
	    \If{head($\sigma$) is object or exact numeric}{
            $\gamma_{\sigma, \pipret}$ = $head(\sigma)$\;
        }
        \If{head($\sigma$) contains a bound operator $A(\mathbf{t}, \rho(a))$}{
            let $\mathit{opt}(\sigma, \pipret)$ be the optimal integer solution of $\mathcal{C(\sigma, \pipret)}$\;
        	$\gamma_{\sigma, \pipret}$ = $A(\mathbf{t}, \mathit{opt}(\sigma, \pipret))$\;
        }
        \If{\head($\sigma$) contains an existential, $\exists \mathbf{z} A(\mathbf{z, x})$}{
            $\gamma_{\sigma, \pipret}$ = $A(\nu, \mathbf{x})$ for a new $\mathit{null}$ $\nu$\;
	    }
	    return $\gamma_{\sigma, \pipret}$\;
	}
	\Else{
	    return NULL\;
	}
	\caption{Procedure to check and compute rule applicability}\label{alg:check-applicability}
\end{algorithm}

The subprocedure shown in Algorithm \ref{alg:check-applicability} (called in Line \ref{alg:line-check-exists-trigger}) represents a vital step in rule derivation, namely, rule application. Since bound arithmetic with min and max operators is involved, we cannot simply derive a new arithmetic value, but we need to compute the \emph{optimal} value to derive, otherwise it would be possible to derive inconsistent facts which violate the semantics of the min and max bound operators. In essence, the call to $\mathit{checkApplicability}$ constructs a linear integer constraint (Line \ref{alg2:line-constr-C}), which has a solution if a rule is applicable w.r.t.\ an interpretation, according to the semantics for Warded Bound \DatalogZ{}. Then, a rule can have the following properties according to the syntax of Warded Bound \DatalogZ{}: The current rule (i) may contain an existential head, and (ii) either contains an object or exact numeric head atom, or (iii) a bound atom in the head

For case (i) we need to enforce the semantics of existential rules, construct a fresh, unused $\mathit{null}$ in the object position in case an existential is present and derive an atom containing the new $\mathit{null}$. For case (ii), we simply check if the rule is applicable to the current interpretation and if so, we simply derive the head atom. For case (iii), we need to compute the optimal value of the linear integer constraint, since we are dealing with $\mathit{min}$ or $\mathit{max}$ bound operators and derive the optimal value. Where optimal means the smallest integer solution for $\rho = \mathit{min}$ and the largest integer solution for $\rho = \mathit{max}$. Finally,  The value computed for the current rule $\gamma_{\sigma, \pipret}$ is then returned to be made available to the main algorithm. If the rule is not applicable the subprocedure returns the empty value ``NULL'' to the main algorithm.

\newcommand{\thmtwo}{
For a Warded Bound \DatalogZ{} program $\lprog{}$ and a fact $\alpha$, deciding whether $\lprog \models \alpha$ is \pcls{}-complete in data complexity.
}
\begin{theorem}\label{thm:warded-bdz-ptime-algorithm}
\thmtwo
\end{theorem}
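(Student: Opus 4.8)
The plan is to establish the two directions of \pcls{}-completeness in data complexity separately. \pcls{}-hardness is immediate: Warded Bound \DatalogZ{} syntactically subsumes plain Datalog (programs with no existentials, no numeric predicates, and no bound operators), and \factent{} (equivalently \bcqeval{}) for plain Datalog is already \pcls{}-hard in data complexity. For membership in \pcls{}, the plan is to show that Algorithm~\ref{algo:chase-for-wbldz} decides $\lprog\models\alpha$ and runs in time polynomial in $|D|$ for a fixed rule set $\Sigma$. This decomposes into three tasks: (a) correctness (soundness and completeness) of the algorithm, (b) its termination, and (c) a polynomial running-time bound; only (b) and (c) require genuinely new arguments, since (a) largely recombines the semantics of bound predicates with the standard chase.

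For (a), soundness follows by induction on the derivation: a fact is added in Line~\ref{alg:line-add-fact} only when \emph{checkApplicable} reports that the linear integer constraint $\mathcal{C}(\sigma,\pipret)$ has a solution, which by construction means $\sigma$'s body is matched under the Warded Bound \DatalogZ{} semantics; for a bound head $A(\mathbf{t},\rho(\ell))$ the algorithm derives the \emph{optimal} value, which is exactly the bound forced by $\lprog$, and for an existential head the fresh null is a legitimate witness as in the ordinary chase. Completeness rests on finite representability of bound predicates: if $A(\mathbf{t},\min(k))$ is entailed then so is $A(\mathbf{t},\min(k'))$ for every $k'\ge k$ (dually for $\max$), so keeping only the optimal value per object tuple loses no information. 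One then argues that the fixpoint $\pipret$ computed by the loop is, modulo this representation and modulo the isomorphism collapsing performed by \emph{checkTermination}, a universal model of $\lprog$, so the single model check in Line~\ref{alg:line-model-check} is correct.

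For (b), termination must simultaneously control two independent sources of non-termination. Numeric divergence, as in $A(\max(x))\to A(\max(x+1))$, is handled by the value-propagation-graph update of Line~\ref{alg:line-update-vpg}: because type-consistent rules are stable~\cite{DBLP:conf/ijcai/KaminskiGKMH17}, diverging terms are detected on cycles of $\mathcal{G}_{\lprog}^{\pipret}$ and replaced by $\infty$, so every bound atom stabilizes to a fixed integer or to $\infty$. Unbounded creation of nulls is handled by \emph{checkTermination}~\cite{Bellomarini:VadalogSystem}: wardedness guarantees that the warded-forest bookkeeping detects isomorphic facts, bounding the number of distinct object atoms. The crucial step, and the main obstacle of the whole proof, is showing these two mechanisms do not interfere: since existential variables occur only in object positions and arithmetic only inside bound operators in the single numeric position, a null never flows into an arithmetic term and a numeric value never influences the warded isomorphism test; hence the two finiteness bounds compose and the derivation halts.

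For (c), everything is bounded polynomially in $|D|$ with $\Sigma$ fixed. By the warded analysis the number of object atoms derivable up to isomorphism is polynomial, and by (b) each carries one numeric value that is an integer with polynomially many bits or $\infty$, so $\pipret$ has polynomial size; combining the warded fact-count bound with the value-propagation stabilization bound of~\cite{DBLP:conf/ijcai/KaminskiGKMH17} shows the \textbf{repeat} loop runs polynomially many times. Within an iteration, the graph update and \emph{checkTermination}~\cite{Bellomarini:VadalogSystem} are polynomial, and \emph{checkApplicable} must test and optimize the linear integer constraint $\mathcal{C}(\sigma,\pipret)$, whose number of variables is bounded by the fixed rule size; integer linear programming in fixed dimension is solvable (and optimizable) in polynomial time by Lenstra's algorithm. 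Finally the model check in Line~\ref{alg:line-model-check} is polynomial. Putting these together yields a polynomial-time decision procedure, hence membership in \pcls{}, which with the hardness above gives \pcls{}-completeness.
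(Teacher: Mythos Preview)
Your proposal is correct and follows essentially the same approach as the paper: both arguments analyze Algorithm~\ref{algo:chase-for-wbldz} and reduce the per-iteration cost to the three ingredients (value-propagation-graph update via a Floyd--Warshall variant, integer programming in fixed dimension for \emph{checkApplicable}, and the warded \emph{checkTermination} procedure of~\cite{Bellomarini:VadalogSystem}), with hardness coming from the plain-Datalog subfragment. Your write-up is in fact more thorough than the paper's terse sketch, which bounds only the per-step cost and leaves the total number of iterations, correctness, and your non-interference observation (that nulls never enter arithmetic terms and numeric values never affect the warded isomorphism test) implicit.
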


\noindent
Note that the bound in Theorem~\ref{thm:warded-bdz-ptime-algorithm} is tight since both reasoning with existentials and arithmetic in stable limit \DatalogZ{} programs ((plain) Datalog resp.) is \pcls-complete in data complexity. 
We would like to highlight that as extension of Warded \DatalogPm{} our language can express all queries Warded \DatalogPm{} can.

\section{Conclusion and Future Work}\label{sec:conclusion}
Our main goal was to provide the first complexity result for a \DatalogPm{} language that supports arithmetic. Motivated by finding an efficient and powerful arithmetic extension of Warded \DatalogPm{}, we closed the following gaps in current research:
\begin{itemize}
    \item We proved that the recently introduced limit \DatalogZ{} fragment captures \conpcls{}.
    \item We defined a syntactic fragment based on ideas of Kaminski et al.\ to improve the syntax and thus the usability of the language.
    \item We defined an extension of Warded \DatalogPm{} with integer arithmetic based on our syntactic fragment.
    \item We showed \pcls{}-completeness of our language and propose an efficient reasoning algorithm.
\end{itemize}
\smallskip
Our language, as an extension of Warded \DatalogPm{}, is powerful enough for complex reasoning tasks involving ontological reasoning and arithmetic as needed in KGs amongst other modern reasoning systems.

In future work, it would be desirable to also support count and sum aggregates directly in our language since this extension would allow us to include all common aggregates in one language. Moreover, a practical implementation of our reasoning language in a KG system would allow us to show empirical results for the efficiency of Warded Bound \DatalogZ{}.
\section* {Acknowledgements}
 This work was supported by the Vienna Science and Technology Fund (WWTF) grant VRG18-013, and the ``rAIson data'' Royal Society grant of Prof.\ Georg Gottlob.
 
\bibliographystyle{named}
\bibliography{main}

\clearpage
\appendix
\section{Additional Definitions}
Let us introduce further definitions in the area of database theory and logical reasoning required for the following proofs.

\subsection{Database-theoretic Notions}\label{sec:prelim-database-theory}
Let us introduce fundamental definitions specifically related to a database-theoretic context. Logic programming and database theory are deeply interconnected. For definitions we follow~\cite{Gottlob:SpaceEfficientCore} in this section.
We need the following notions: a \textit{schema} $S$ is a finite set of predicates. An instance $J$ over $S$ is a set of atoms that contains constants and labelled $\mathit{nulls}$ that are equivalent to new Skolem constants, not appearing in the set of terms yet. A \textit{database} over $S$ is a finite set of facts over $S$. Since queries can be seen as functions, mappings between sets of atoms are important in a database context. An \textit{atom homomorphism} between two sets of atoms $A,B$ is a substitution $h: A \mapsto B$ s.t. $h$ is the identity for constants and $P(t_1, \dots, t_n) \in A$ implies $h(P(t_1,\dots,t_n)) = P(h(t_1),\dots, h(t_n)) \in B$.

We consider conjunctive queries, which are equivalent to simple SELECT-FROM-WHERE queries in SQL and are of main interest in database theory.
A conjunctive query (CQ) over a schema $S$ is a rule of the form 
\[Q(\mathbf{x}) \leftarrow P_1(\mathbf{z_1}), \dots, P_n(\mathbf{z_n}),\]
where the predicate $Q$ is used in the head only, $P_1, \dots,P_n$ are atoms without $\mathit{nulls}$ and $\mathbf{x}\subseteq \bigcup z_i$ are the \textit{output variables}. A Boolean conjunctive query (BCQ) is a CQ where the head predicate $Q$ is of arity zero, hence contains no (free) variables in $\mathbf{x}$. The answer to a BCQ $q$ is $yes$ if the empty tuple () is an answer of $q$.

The \textit{evaluation} of a CQ $q(\mathbf{x})$ over an instance $J$ is the set of all tuples of constants induced by a homomorphism $h(\mathbf{x})$, where $h: \textrm{\atoms}(q) \mapsto J$.

We mostly consider an extension of traditional logic programs, which allow existential quantification in rule heads. These types of rules are called \textit{Tuple Generating Dependencies} in database theory.

\begin{definition}[Tuple Generating Dependencies, TGDs]
A Tuple Generating Dependency $\sigma$ is a first order sentence of the form
\[\forall \mathbf{x} \forall \mathbf{y} (\phi(\mathbf{x},\mathbf{y}) \to \exists \mathbf{z} \psi(\mathbf{x},\mathbf{z})),\]
where \textbf{x},\textbf{y},\textbf{z} are tuples of variables and $\phi, \psi$ are conjunctions of atoms without constants and $\mathit{nulls}$.
\end{definition}
For brevity, we typically write a TGD as $$\phi(\mathbf{x},\mathbf{y}) \to \exists \mathbf{z} \psi(\mathbf{x},\mathbf{z})$$ and use comma instead of $\land$ to indicate conjunctions (and joins respectively). A TGD $\sigma$ is \textit{applicable} w.r.t.\ an interpretation $I$ if there exists a homomorphism $h$ s.t. $h($\body$(\sigma)) \subseteq I$. 

Having the definition of rule applicability at hand, we can now define the formal model-theoretic semantics of TGDs w.r.t.\ an interpretation.

\begin{definition}[Semantics of TGDs]\label{def:sem-of-tgds}
An instance $J$ satisfies a TGD $\sigma$, denoted $J \models \sigma$, if the following holds: If $\sigma$ is applicable w.r.t.\ $J$, i.e., there is a homomorphism $h$ in $\sigma$ s.t. $h(\phi(\mathbf{x},\mathbf{y})) \subseteq J$ then there is a homomorphism $h' \supseteq h_{|\mathbf{x}}$ s.t. $h'(\psi(\mathbf{x},\mathbf{z})) \subseteq J$.
An instance $J$ satisfies a set of TGDs $\Sigma$, $J \models \Sigma$, if $J \models \sigma$ for each $\sigma \in \Sigma$.
\end{definition}

Since we are concerned with the complexity of reasoning with logical languages, we need to define fundamental reasoning tasks whose (worst-case) complexity we use as measure. These are the main problems we use in our complexity theoretic investigations. Conjunctive query answering is one of the fundamental reasoning tasks of TGDs~\cite{Gottlob:SpaceEfficientCore}. 

\begin{definition}[CQ answering under TGDs]
Given a database $D$ and a set of TGDs $\Sigma$, an instance $J$ is a \textit{model} of $D$ and $\Sigma$ s.t. $J \supseteq D$ and $J \models \Sigma$. We use $\text{mods}(D, \Sigma)$ to denote the set of all models of $D, \Sigma$. 
\end{definition}

One of the main tasks considered in database systems is computing the \textit{certain answers}, $\text{\certans}(q, D , \Sigma)$ to a query $q$, which is usually defined as the set of tuples occurring in all models of a database and a set of TGDs: $$\text{\certans}(q, D, \Sigma) = \bigcap_{J \in \text{mods}(D, \Sigma)} q(J).$$

For complexity theoretic questions, one often considers the corresponding decision problem of deciding whether for a tuple $\mathbf{c}$ it holds that $\mathbf{c} \in$ \certans $(q, D, \Sigma)$. We denote this decision problem as \cqans($\Sigma$).

\paragraph{Chase Procedure.}
The standard technique that can be applied to solve the problem of computing certain answers is the \textit{chase procedure}~\citeappendix{DBLP:conf/pods/JohnsonK82,DBLP:conf/sigmod/MaierMS79}.
This is a forward chaining approach to compute certain answers to a query. We let \chase($J, \Sigma$) denote the result of applying the chase procedure for an instance $J$ under a set of TGDs $\Sigma$ (i.e., a universal model which is homomorphically embeddable into every other model of $D$ and $\Sigma$). 
A standard result in database theory is that given a database $D$ and a CQ $q$, $\text{\certans}(q, D, \Sigma)$ = $q(\text{\chase}(D, \Sigma))$.

A central problem in database theory is the problem of checking whether a certain tuple can be derived from a database and a set of rules, i.e., whether the database and the set of rules entail a certain tuple. This problem is usually called query evaluation.

\begin{definition}[BCQ Evaluation Problem, \bcqeval]
Given a BCQ $q$, a set of TGDs $\Sigma$ and a database $D$, the BCQ evaluation problem is to decide whether the empty tuple () is entailed:
\[ D \cup \Sigma \models q\]
\end{definition}

Note that showing that $D \cup \Sigma \models q$ is equivalent to showing that $D \cup \Sigma \cup \set{\lnot q}$ is an unsatisfiable theory (i.e., an unsatisfiable set of first-order sentences)~\cite{DBLP:journals/jair/CaliGK13}. A classical result in database theory shows that two of the most fundamental problems, \cqans{} and \bcqeval{} are \logspacecls{}-equivalent~\cite{DBLP:journals/jair/CaliGK13}.

Another fundamental problem that is often used to prove complexity results about Datalog languages is the problem of deciding whether a database $D$ and a set of rules $\Sigma$ entail a certain fact.
\begin{definition}[Fact Entailment Problem, \factent{}]
Given a program $\lprog = D \cup \Sigma$ and a fact $\alpha$, the problem of deciding whether $\lprog{} \models \alpha$ is denoted \factent{}.
\end{definition} 
Note that since we can encode a CQ as a rule (a CQ is essentially a rule), the notion of fact entailment subsumes conjunctive query answering~\citeappendix{DBLP:conf/rweb/Kostylev20,DBLP:journals/ai/BagetLMS11}. 
The following easily verifiable proposition formalizes this intuition.

\begin{proposition} 
  The problem \bcqeval{} is polynomial-time (Karp) reducible to \factent{}.
\end{proposition}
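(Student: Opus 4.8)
The plan is to give a straightforward reduction that turns a Boolean conjunctive query $q$ together with a set of TGDs $\Sigma$ and a database $D$ into a fact-entailment instance over the same schema extended with one fresh nullary (or arity-$|\mathbf{x}|$) predicate. First I would recall that a BCQ $q$ is a rule of the form $Q() \leftarrow P_1(\mathbf{z_1}), \dots, P_n(\mathbf{z_n})$ whose head predicate $Q$ is fresh (zero-arity) and occurs only in the head; syntactically this is exactly a Datalog/TGD rule with no existential variables. So set $\Sigma' = \Sigma \cup \{\,P_1(\mathbf{z_1}), \dots, P_n(\mathbf{z_n}) \to Q()\,\}$, keep $D$ unchanged, and take the target fact to be $\alpha = Q()$. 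This transformation is clearly computable in polynomial time (indeed in logarithmic space): we only add a single rule whose size is linear in $q$.

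The core of the argument is the equivalence $D \cup \Sigma \models q \iff (D \cup \Sigma') \models Q()$. For the forward direction, suppose $D \cup \Sigma \models q$, and let $\ipret$ be any model of $D \cup \Sigma'$. Then $\ipret$ is in particular a model of $D \cup \Sigma$, so by assumption the empty tuple is a certain answer to $q$, i.e.\ there is a homomorphism $h$ from $\atoms(q)$ into $\ipret$. Since $\ipret$ satisfies the added rule and $h$ witnesses that its body is satisfied, we get $Q() \in \ipret$, hence $\ipret \models Q()$. As $\ipret$ was arbitrary, $(D \cup \Sigma') \models Q()$. For the converse, assume $(D \cup \Sigma') \models Q()$. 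Let $\ipret$ be any model of $D \cup \Sigma$ not containing $Q()$ (such models exist because $Q$ is fresh and does not occur in $D \cup \Sigma$; concretely, take a model of $D\cup\Sigma$ and remove all $Q$-atoms, which preserves being a model of $\Sigma$ since $Q \notin \sch(\Sigma)$). Now $\ipret$ cannot be a model of $\Sigma'$: if it were, we would have $Q() \in \ipret$, contradiction; since $\ipret \models D \cup \Sigma$, the only rule of $\Sigma'$ it can violate is the added one, so its body has a satisfying homomorphism $h: \atoms(q) \to \ipret$, i.e.\ $\ipret \models q$. This shows every model of $D \cup \Sigma$ satisfies $q$, i.e.\ $D \cup \Sigma \models q$. (Equivalently one can phrase both directions via the chase: $q(\chase(D,\Sigma)) = \text{yes}$ iff $Q() \in \chase(D, \Sigma')$, since the chase of $\Sigma'$ extends that of $\Sigma$ by firing the new rule exactly when $q$ maps into the chase.)

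There is essentially no hard part here — the statement is deliberately an "easily verifiable" proposition — but the one point that deserves care is the converse direction, where one must exploit that $Q$ is a genuinely fresh predicate so that the existence of a $Q$-free model of $D \cup \Sigma$ is guaranteed; without freshness the reduction could fail. The only other thing to note is that the added rule is a legal TGD/Datalog rule (no existentials, no constants or nulls beyond those allowed), so the reduced instance is a bona fide \factent{} instance. Finally, polynomial-time (indeed \logspacecls{}) computability of the reduction is immediate from its syntactic, local nature, which establishes the claimed Karp reducibility.
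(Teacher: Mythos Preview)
Your proposal is correct and takes essentially the same approach as the paper: add the BCQ as a rule to the program and ask whether the head fact $Q()$ is entailed. Your write-up is in fact considerably more careful than the paper's own proof, which is very terse and does not explicitly discuss the freshness of $Q$ or the construction of a $Q$-free model in the converse direction; your attention to that point is exactly the right place to spend the extra sentence.
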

\begin{proof}
Let $Q = \phi(\mathbf{x}) \rightarrow q$ be a BCQ. We construct an instance of \factent{} $(P, \alpha)$: $P = \phi(\mathbf{x}) \rightarrow q \text{ and } \alpha = q$.
If $Q$ is a positive instance of \bcqeval{} then there exists a homomorphism $h: h(\phi(\mathbf{x}))\in I$ for an interpretation $I$. Then $I \models P$ implies $I \models \alpha$, thus $P \models \alpha$. 
If $P \models \alpha$ then $I \models \alpha$ whenever $I \models P$. Hence, $I\models \phi(\mathbf{x})$ and $q \in I$.
The reduction is clearly computable in polynomial time.
\end{proof}

As with the problems above, we consider both, the data complexity and the combined complexity of \factent{}: 

\begin{itemize}
    \item Combined complexity: we assume $\lprog$ and $\alpha$ are part of the input 
    \item Data complexity: we assume $\lprog$ to be given as a dataset $\dset$ and a program $\lprog'$: $\lprog = \lprog' \cup \dset$ where only $\dset$ and $\alpha$ are part of the input and $\lprog'$ is fixed. 
\end{itemize}

\section{Proofs for Section 4}
Note that from the definition of a Boolean query, it is easy to see that this definition equivalent to stating that $Q$ coincides with the subclass 
$$\mathcal{K'} \subseteq \mathcal{K}, \mathcal{K'}=\set{\mathfrak{A} \in \mathcal{K} : Q(\mathfrak{A})=1}.$$

Due to this formulation, a Boolean query is also said to be a \textit{property} of $\mathcal{K}$.
\begin{definition}[Query]
Let $\tau$ be a vocabulary and $k$ a positive integer.
\begin{itemize}
    \item A k-ary query $Q$ on a class $\mathcal{K}$ of $\tau$-structures is a mapping with domain $\mathcal{K}$ s.t.
    \begin{itemize}
        \item For $\mathfrak{A} \in \mathcal{K}$ $Q(\mathfrak{A})$ is a k-ary relation on $\mathfrak{A}$, and
        \item Q is closed (preserved) under isomorphism;
    \end{itemize}
    \item A Boolean query $Q$ is a mapping $Q: \mathcal{K} \mapsto \set{0,1}$ that is closed under isomorphism. This is equivalent to stating that $Q$ coincides with the subclass $\mathcal{K'} \subseteq \mathcal{K}, \mathcal{K'}=\set{\mathfrak{A} \in \mathcal{K} : Q(\mathfrak{A})=1}$. Due to the latter formulation a Boolean query is often said to be a \textit{property} of $\mathcal{K}$.
\end{itemize}
\end{definition}

\begin{definition}[L-definable Query]
Let $L$ be a logic and $\mathcal{K}$ a class of $\tau$-structures. A Boolean Query $Q$ on $\mathcal{K}$ is $L$-definable if there exists an $L$ formula $\varphi(\mathbf{x})$ with free variables $\mathbf{x}$ s.t. for every $\mathfrak{A} \in \mathcal{K}$:
$$Q(\mathfrak{A}) = \set{(\mathbf{x}) \in A^k : \mathfrak{A} \models \varphi(\mathbf{a})}.$$

\end{definition}

It is important to say that the notion of $L$-definability on a class $\mathcal{K}$ of structures is a \textit{uniform} definability notion. Hence, the same $L$-formula serves as specification of the query on \textit{every} structure in $\mathcal{K}$. This is analogous to the concept of uniform computation of Turing machines known from computational complexity theory~\cite{DBLP:books/daglib/0023084}. \\
To show that a query $Q$ on $\mathcal{K}$ is $L$-definable, it suffices to construct a $L$-formula defining $Q$ on every structure in $\mathcal{K}$. Since our languages are in a Datalog context, let us define the notion of a Datalog query and its complexity.

\begin{definition}[Datalog Query]
A Datalog query is a pair $(\Pi, R)$, of a Datalog program $\Pi$ and a head predicate $R \in \Pi$. The query $(\Pi, R)$ associates the result $(\Pi, R)^{\mathfrak{A}}$ which is the interpretation of $R$ computed by $\Pi$ from the input $\mathfrak{A}$.
\end{definition}
A Datalog query essentially consists of a Datalog program and some designated predicate occurring in the program. The result of a query for a certain input structure is obtained by applying the fixpoint semantics of the Datalog language of the query. As in the area of logic programming, the data complexity of Datalog (or logics in general) plays a vital role for complexity investigations.

\begin{definition}[Data complexity of Logics]
Let $L$ be a logic. The data complexity of $L$ is the family of decision problems $Q_\varphi$ for each fixed $L$ sentence $\varphi$: Given a finite structure $\mathfrak{A}$, does $\mathfrak{A} \models \varphi$?
\end{definition}

We next want to define how we can determine whether the data complexity of a logic is in a complexity class $\mathcal{C}$, or even hard for a complexity class $\mathcal{C}$. The following is a standard definition of these two notions.

\begin{definition}[Complexity of Logics]
Let $L$ be a logic and $\mathcal{C}$ a complexity class.
\begin{itemize}
    \item The data complexity of $L$ is in $\mathcal{C}$ if for each $L$ sentence $\varphi$, the decision problem $Q_\varphi$ is in $\mathcal{C}$.
    \item The data complexity of $L$ is complete for $\mathcal{C}$ if it is in $\mathcal{C}$ and at there exists an $L$ sentence $\psi$ s.t. the decision problem $Q_\psi$ is $\mathcal{C}$-complete
\end{itemize}
\end{definition}

Informally put, from the point of view of descriptive complexity we know that every limit \DatalogZ{} definable query is in \conpcls{} and some such queries are even \conpcls{}-hard in data complexity by the \conpcls{} completeness result of positive limit \DatalogZ{}. The question now is if \DatalogZ{} is powerful enough to define every \conpcls{} computable query. Expressive power of logics is defined via the notion of \textit{capturing} complexity classes.

\begin{definition}[Logics Capturing Complexity Classes]
A language $\mathcal{L}$ captures a complexity class $\mathcal{C}$ on a class of databases $\mathcal{D}$ if for each query $Q$ on $\mathcal{D}$ it holds that: $Q$ is definable in $\mathcal{L}$ if and only if it is computable in $\mathcal{C}$.
\end{definition}

A cornerstone result in the development of descriptive complexity theory was the following seminal theorem by Fagin showing that the fragment of second-order logic that allows only existential quantification of relation symbols, existential second-order logic ($\exists$SO, sometimes denoted as $\Sigma_1^1$), captures non-deterministic polynomial time on the class of all finite structures.

\begin{theorem}[Fagin's Theorem~\cite{fagin1974generalized}]\label{thm:fagins-theorem}
Let $\mathcal{K}$ be an isomorphism-closed class of finite structures of some finite nonempty vocabulary. Then $\mathcal{K}$ is in \npcls{} if and only if $\mathcal{K}$ is definable by an existential second-order sentence.
\end{theorem}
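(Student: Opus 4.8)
The plan is to prove the stated equivalence by establishing both implications separately, with the $\exists$SO-to-\npcls{} direction being routine and the \npcls{}-to-$\exists$SO direction carrying all the weight. For the easy direction, suppose $\mathcal{K}$ is defined by a sentence $\Phi = \exists R_1 \cdots \exists R_m\, \varphi$ with $\varphi$ first-order over $\tau \cup \{R_1,\dots,R_m\}$. Given a finite $\tau$-structure $\mathfrak{A}$ with $|A| = n$, a nondeterministic machine guesses interpretations for the $R_i$ — each $R_i$ of fixed arity $a_i$ is a subset of $A^{a_i}$, hence described by $n^{a_i}$ bits, polynomial in $n$ — and then checks in deterministic polynomial time (in fact \logspacecls{}) that $(\mathfrak{A}, R_1,\dots,R_m) \models \varphi$, using that model checking a fixed first-order sentence is tractable. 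So $\mathcal{K} \in \npcls{}$.

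For the hard direction, let $M$ be a nondeterministic Turing machine that, on input the standard string encoding $\mathrm{enc}(\mathfrak{A})$ of a $\tau$-structure $\mathfrak{A}$, halts within $n^k$ steps and accepts iff $\mathfrak{A} \in \mathcal{K}$. I would write an $\exists$SO sentence $\Phi$ asserting ``there is an accepting run of $M$ on $\mathrm{enc}(\mathfrak{A})$.'' First, $\exists$SO can quantify a binary relation $<$ and a first-order clause can assert that $<$ is a linear order of the universe; from $<$ one first-order-defines successor, minimum, maximum, and — crucially — a lexicographic order together with zero, successor, and ``plus one'' on $k$-tuples, which then range over $[0, n^k)$ and serve as time stamps and tape-cell addresses. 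Next I existentially quantify the relations coding the run: for each tape symbol $s$ a relation $T_s(\bar t, \bar p)$ (``cell $\bar p$ holds $s$ at time $\bar t$''), a relation $H(\bar t, \bar p)$ for the head, a relation $S_q(\bar t)$ for each state $q$, and auxiliary relations resolving the nondeterministic choices at each step. The first-order part $\varphi$ conjoins: (a) $<$ is a linear order; (b) the time-$0$ configuration places $\mathrm{enc}(\mathfrak{A})$ on the tape, translating each atom $R(\bar a)$ of $\mathfrak{A}$ into the corresponding tape contents; (c) for every time $\bar t$ below $n^k - 1$, the configuration at $\bar t + 1$ follows from that at $\bar t$ by a legal transition of $M$ — a local, window-based condition about cells $\bar p - 1, \bar p, \bar p + 1$, the head, and the state, hence first-order; and (d) an accepting state is eventually reached. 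A straightforward induction on time steps shows that $(\mathfrak{A}, <, \dots) \models \varphi$ for some choice of the quantified relations iff $M$ has an accepting run on $\mathrm{enc}(\mathfrak{A})$, whence $\mathfrak{A} \models \Phi$ iff $\mathfrak{A} \in \mathcal{K}$.

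The main obstacle is the bookkeeping in the hard direction: one must (i) encode numbers up to $n^k$ as $k$-tuples and define their arithmetic purely in first-order logic with $<$; (ii) faithfully lay out the relational input $\mathfrak{A}$ as the string $\mathrm{enc}(\mathfrak{A})$ on the tape at time $0$ using only first-order means (this is where the orderly, fixed nature of the encoding is needed); and (iii) argue that the answer is independent of the guessed order $<$ — which holds because $M$ decides an isomorphism-closed class, so it accepts one encoding of $\mathfrak{A}$ iff it accepts them all, making the existential over $<$ harmless. The remaining technical points are routing $M$'s nondeterminism through an extra quantified relation and phrasing clause (c) as a genuine first-order window condition; the easy direction has no real obstacle beyond the observation that a fixed first-order sentence can be evaluated in polynomial time.
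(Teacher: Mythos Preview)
Your sketch is a faithful outline of the standard proof of Fagin's theorem: guess-and-check for the $\exists$SO $\Rightarrow$ \npcls{} direction, and a Turing-machine run encoded by existentially quantified relations over $k$-tuples (time, tape cell) for the converse, with the guessed linear order supplying the arithmetic and the isomorphism-closure of $\mathcal{K}$ neutralising the dependence on the order. There is nothing wrong with it as a proof plan.

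However, there is nothing to compare it against: the paper does \emph{not} prove this theorem. It is stated with the citation~\cite{fagin1974generalized} as a classical result from the literature and is invoked only to derive Corollary~\ref{cor:limit-dz=uso} from Theorem~\ref{thm:desc-compl-main}. The paper's own technical work in Section~\ref{sec:expr-power} is the proof that limit \DatalogZ{} captures \conpcls{}, which is a different construction (encoding a \conpcls{} machine directly by a limit \DatalogZ{} program, using max-limit predicates to track guess strings as integers). So your write-up is correct in spirit but is supplying a proof the paper deliberately omits as background.
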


Fagin's Theorem is a remarkable result, proving the first precise connection between a computational complexity class and logic. Informally, it ensures that a property of finite structures is recognizable in non-deterministic polynomial time if it is definable in $\exists$SO. Note that this immediately implies that the universal fragment of second-order logic ($\forall$SO) captures \conpcls{} by complementation.

\begin{definition}
Let $L$ be a logic and $\mathcal{K}$ a class of $\tau$-structures. A Boolean Query $Q$ on $\mathcal{K}$ is $L$-definable if there exists an $L$ formula $\varphi(\mathbf{x})$ with free variables $\mathbf{x}$ s.t. for every $\mathfrak{A} \in \mathcal{K}$:
$$Q(\mathfrak{A}) = \set{(\mathbf{x}) \in A^k : \mathfrak{A} \models \varphi(\mathbf{a})}.$$
\end{definition}

\noindent
Analogously, a Datalog query consists of a Datalog program $\lprog$ and some designated fact $\alpha$ occurring in the program. The evaluation of a query on a structure $\mathfrak{A}$ is the value of $\alpha$ with the relations of $\mathfrak{A}$ acting as extensional database (EDB) of $\lprog$. Usually, a Datalog program is defined by its declarative semantics given by a least fixpoint operator (as is the case for limit \DatalogZ{}). Expressive power of logics is defined via the notion of \textit{capturing} complexity classes.

\begin{definition}
A language $\mathcal{L}$ captures a complexity class $\mathcal{C}$ on a class of databases $\mathcal{D}$ if for each query $Q$ on $\mathcal{D}$ it holds that: $Q$ is definable in $\mathcal{L}$ if and only if it is computable in $\mathcal{C}$.
\end{definition}
Let us now rigorously state and prove our main theorem of Section \ref{sec:expr-power}.

\medskip\noindent
\textbf{Theorem \ref{thm:desc-compl-main}.} \textit{\thmone}

\smallskip
Note that later we consider semi-positive limit \DatalogZ{} that is, we allow negation of EDB (i.e., input) predicates in limit \DatalogZ{}. This is analogous e.g.\ to the capture result by Blass, Gurevich~\citeappendix{DBLP:conf/birthday/BlassG87}, and independently by Papadimitriou~\citeappendix{papadimitriou1985note} that Datalog captures \pcls{} on successor structures. Datalog alone is too weak to capture \pcls{}, hence in order to capture \pcls{} computation, negation of input predicates is needed (we discuss this issue in more detail in the proof). We begin with the simple direction, proving that the second item of Theorem~\ref{thm:desc-compl-main} implies the first, which is straightforward knowing that limit \DatalogZ{} is \conpcls{} complete.
\begin{lemma}
The limit \DatalogZ{} definable Boolean queries are all computable in \conpcls{}.
\end{lemma}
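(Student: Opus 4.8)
The plan is to reduce the evaluation of a limit \DatalogZ{} query to the fact entailment problem \factent{}, for which a $\conpcls$ upper bound in data complexity has already been established by Kaminski et al.~\cite{DBLP:conf/ijcai/KaminskiGKMH17}. Concretely, fix a Boolean query $Q$ that is limit \DatalogZ{} definable, witnessed by a pair $(\lprog, \alpha)$ consisting of a \emph{fixed} limit \DatalogZ{} program $\lprog$ and a designated ground fact $\alpha$; by definition of a Datalog query, for a finite $\tau$-structure $\mathfrak{A}$ we have $Q(\mathfrak{A}) = 1$ iff $\lprog \cup \mathfrak{A} \models \alpha$, where the relations of $\mathfrak{A}$ act as the EDB of $\lprog$. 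I would first note in passing that $Q$ is a legitimate Boolean query: the least-fixpoint / entailment semantics of limit \DatalogZ{} is preserved under isomorphisms of the input structure, so $Q$ is closed under isomorphism as required.

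The core step is then essentially immediate: since $\lprog$ and $\alpha$ are fixed and only $\mathfrak{A}$ varies, deciding whether $Q(\mathfrak{A}) = 1$ is exactly an instance of \factent{} for limit \DatalogZ{} measured in data complexity, which lies in $\conpcls$ by the result of Kaminski et al. Hence $Q_\varphi \in \conpcls$ for the defining sentence, i.e.\ the data complexity of limit \DatalogZ{} is in $\conpcls$, which is precisely the statement of the lemma. If one prefers to inline the argument rather than cite it as a black box, the same bound is obtained directly: a nondeterministic machine can, in time polynomial in $|\mathfrak{A}|$, guess a finite representation of a limit interpretation (using the limit predicates to finitely encode the otherwise infinite interpretation over $\Z$) and verify both that it is a model of $\lprog \cup \mathfrak{A}$ and that it does \emph{not} satisfy $\alpha$; since $Q(\mathfrak{A}) = 1$ holds iff no such counter-model exists, the problem is in $\conpcls$.

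There is no real obstacle here beyond bookkeeping; the only points that deserve a line of care are the (polynomial-time, isomorphism-faithful) encoding of $\mathfrak{A}$ as an EDB instance, and -- if we want the argument to also cover the semi-positive rules used later in the section -- the observation that negated EDB atoms are harmless, since they are fully determined by the input and can be materialised in polynomial time during preprocessing, leaving the $\conpcls$ bound intact. The substance of the lemma is carried entirely by the $\conpcls$-membership result of Kaminski et al.; the genuine content of this section is the converse direction, which is handled in the subsequent lemmas.
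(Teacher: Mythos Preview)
Your proposal is correct and takes essentially the same approach as the paper: both arguments simply invoke the \conpcls{}-completeness of \factent{} for limit \DatalogZ{} established by Kaminski et al., with your version spelling out the bookkeeping (query-to-\factent{} reduction, isomorphism closure, handling of semi-positive negation) that the paper's one-line proof leaves implicit. The paper's proof is literally ``the proposition follows from the \conpcls{}-completeness of limit \DatalogZ{}'', so your additional care is harmless elaboration rather than a different route.
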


\begin{proof}
We can clearly compute every positive limit \DatalogZ{} query in \conpcls{}, i.e., the proposition follows from the \conpcls{}-completeness of limit \DatalogZ{}.
\end{proof}

It remains to show that every Boolean query on a class of finite structures computable in \conpcls{} is definable in positive limit \DatalogZ{}, i.e., that the properties decidable in \conpcls{} on finite structures are definable by positive limit \DatalogZ{} queries.

\begin{lemma}
All Boolean queries computable in \conpcls{} are definable by a Boolean limit \DatalogZ{} query. 
\end{lemma}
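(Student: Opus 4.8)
The plan is to show that limit \DatalogZ{} (with negation of EDB predicates, i.e.\ in the semi-positive setting) can simulate the standard $\forall$SO / \conpcls{} machinery. Concretely, by Fagin's theorem a Boolean query $Q$ is in \npcls{} iff its complement is definable by a formula $\exists \mathbf{S}\, \forall \mathbf{y}\, \exists \mathbf{z}\, \theta(\mathbf{S},\mathbf{y},\mathbf{z})$ with $\theta$ quantifier-free; hence $Q$ itself is definable by a $\forall$SO sentence $\forall \mathbf{S}\, \varphi(\mathbf{S})$ with $\varphi$ first-order. So it suffices to express, in limit \DatalogZ{}, a query of the form ``for \emph{every} choice of the guessed relations $\mathbf{S}$ over the universe, the first-order condition $\varphi(\mathbf{S})$ holds.'' The key observation is that integer arithmetic lets a \emph{single} numeric argument encode an exponentially large object — in particular a full truth-assignment / choice of relations $\mathbf{S}$ — and that limit predicates with the $\min$/$\max$ bound semantics naturally provide the universal quantifier over these encodings for free, because the least fixpoint of a limit program records the extremal value over all derivations.

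The steps, in order, would be: (1) Fix an ordering of the $n$-element universe (using the successor/ordering that is available in the semi-positive setting, exactly as in the Blass--Gurevich / Papadimitriou capture of \pcls{} by Datalog on successor structures), so that tuples over $A$ are identified with integers in a bounded range and a second-order object $\mathbf{S}$ is identified with an integer $s$ in $[0, 2^{p(n)})$. (2) Write limit \DatalogZ{} rules that, given such a code $s$, decode the bit of $s$ corresponding to a tuple $\bar a$ — this is pure linear integer arithmetic over a polynomially bounded range, which is within limit \DatalogZ{} (no multiplication of variables is needed once the place-values are materialised by recursion over the successor order). (3) Encode the first-order evaluation of $\varphi(\mathbf{S})$ as an ordinary Datalog-style recursion whose atomic steps query the decoded bits; this produces a $0/1$-valued ``$\varphi$ fails for code $s$'' predicate $\mathit{Bad}(s)$. (4) Use a limit predicate, say $\mathit{Good}(\max(0))$ seeded, together with a rule that collects over all $s$ a value witnessing universality — more precisely, set things up so that the query fact is entailed iff $\mathit{Bad}(s)$ is false for every $s$ in range; the $\min$/$\max$ bound semantics turns ``for all $s$'' into a single aggregated numeric bound, which is exactly what limit predicates are designed to compute. (5) Verify the construction is \emph{positive} (only EDB predicates are negated), so it lies in the semi-positive fragment the theorem allows, and check that the resulting program is a legal limit \DatalogZ{} program (linear arithmetic, proper limit-predicate discipline).

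The main obstacle, and the part deserving genuine care, is Step (4): making the universal quantification over the exponentially many guesses $s$ go through \emph{within} the restrictions of limit \DatalogZ{}. One cannot literally iterate over $2^{p(n)}$ values in a polynomial-time fixpoint, so the argument must lean on the fact that \factent{} / query entailment for limit \DatalogZ{} is itself \conpcls{}-complete: the co-nondeterminism is hidden inside the \emph{semantics} of limit predicates (a fact is entailed iff it holds in \emph{all} models, and models correspond to choices of the guessed relation), not inside an explicit loop. So the real content is to design the rules so that the space of models of the program is in bijection with the space of guesses $\mathbf{S}$, and entailment of the designated fact coincides with $\varphi$ holding under all of them — i.e.\ to exhibit the $\forall$SO sentence's structure directly in the model theory of a limit program. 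A secondary technical point is confirming that everything can be done with \emph{linear} arithmetic only: the bit-decoding needs powers of two, which must be supplied by a recursively computed EDB-independent table rather than by multiplying variables, and one must check this table stays polynomially sized, which it does because only $p(n) = O(\log(\text{size of })\,2^{p(n)})$-many place values are ever needed and $p$ is a fixed polynomial.
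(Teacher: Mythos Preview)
Your high-level route differs from the paper's. The paper does \emph{not} go through Fagin and a $\forall$SO sentence; it gives a direct Turing-machine simulation. Given a \conpcls{} machine $M'$, it takes the complementary \npcls{} machine $M$ and encodes $M$'s computation in limit \DatalogZ{}: object $k$-tuples encode time and tape positions, and the single numeric (max-limit) argument carries the \emph{guess string} $g(\pi)$ of the current branch, built incrementally by the rules via $m' \doteq m+m$ (guess $0$) or $m' \doteq m+m+1$ (guess $1$). Rejection is then back-propagated along the guess tree, and the goal fact is derived iff the initial configuration is rejected under \emph{every} guess string. The universal quantification over branches thus comes from the max-limit semantics together with this incremental doubling, not from any ``all models'' argument.

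Your Step~(4) is where the proposal breaks. You write that ``the co-nondeterminism is hidden inside the semantics of limit predicates (a fact is entailed iff it holds in all models, and models correspond to choices of the guessed relation) \ldots\ the real content is to design the rules so that the space of models of the program is in bijection with the space of guesses $\mathbf{S}$.'' This is not how (semi-)positive limit \DatalogZ{} works: such programs are monotone and have a unique least model, so entailment is truth in that least model, and there is no family of models indexed by guesses to quantify over. The \conpcls{}-hardness of limit \DatalogZ{} does not arise from an ``all models'' phenomenon but from the fact that a single limit fact implicitly asserts an unbounded set of numeric instances, which rules can exploit; you cannot simply invoke that hardness result as a black box for the capture direction, which needs a uniform construction. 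Your earlier, vaguer formulation (``the $\min/\max$ bound semantics turns `for all $s$' into a single aggregated numeric bound'') is closer in spirit, but you never say what that rule is, and the obvious candidates run into the restriction that limit atoms carry exactly one numeric position: you cannot parameterise a computation by a \emph{fixed} code $s$ in a numeric position while simultaneously using numeric arithmetic for something else, because the limit closure immediately conflates different values of $s$.

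A second, related gap is Step~(2). Decoding bit $i$ of an exponentially large code $s$ is not ``pure linear integer arithmetic'': it needs (the effect of) integer division by $2^i$, and materialising the powers of two does not by itself let you extract $\lfloor s/2^i\rfloor \bmod 2$ with only linear constraints unless you also guess quotients and remainders and certify them, which again collides with the single-numeric-position discipline of limit predicates. The paper sidesteps this entirely by never decoding: it \emph{builds} the guess string one bit per transition, so only doubling and increment are needed, and the object arguments (time, tape cell) keep different branches from interfering. If you want to salvage the $\forall$SO route, you would need an analogous trick that synchronises an object-level index with the numeric code so that distinct guesses are not merged by the limit closure; as written, the proposal does not supply one.
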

\begin{proof}
We show that every class of structures $\mathcal{K}$ that is recognizable by a \conpcls{} Turing machine is definable by a positive limit \DatalogZ{} query. To this end, we construct a positive limit \DatalogZ{} query $Q$ over a dataset $\mathcal{D}$ defining an input structure $\mathfrak{A}$. 

Before giving the proof, we want to point out several (standard but non-trivial) technicalities that originate mainly from the discrepancy between logic and our model of computation. 
\begin{itemize}
    \item Turing machines compute on string encodings of input structures, which implicitly provides a linear order on $\mathfrak{A}$ (the elements of the universe of $\mathfrak{A}$). For our proof this is not a problem since we can simply non-deterministically guess a respective linear order. We thus say a  TM $M$ decides a class of structures $\mathcal{K}$ if $M$ decides the set of encodings of structures in $\mathcal{K}$. From now on we fix some kind of canonical encoding (enabled through the linear ordering). We use $enc(\mathfrak{A},<)$ to refer to the set of encodings of a unordered structure $\mathfrak{A}$ having $<$ as linear order on the universe of $\mathfrak{A}$.
     \item Turing machines can consider each input bit separately, but Datalog programs cannot detect that some atom is not part of the input structure. This is due to the fact that negative information is handled via the closed world assumption and the fact that we only represent positive information in databases~\citeappendix{DBLP:journals/csur/DantsinEGV01}. Thus, we need to slightly extend the syntax of the programs we consider by allowing negated EDB predicates in rule bodies. In our proof this is only relevant for the input encoding. 
\end{itemize}

Consider a single-tape \conpcls{} TM $M'$ s.t. $M'$ recognizes a class of structures $\mathcal{K}$, i.e., $M'$ accepts $enc(\mathfrak{A},<)$ if $\mathfrak{A} \in \mathcal{K}$. 
Let $M = (\Gamma, S, \delta)$ be a single tape, non-deterministic polynomial time TM recognizing the complement of $\mathcal{K}$. That is, $M$ accepts $enc(\mathfrak{A},<)$ if $\mathfrak{A} \notin \mathcal{K}$ and rejects $enc(\mathfrak{A},<)$ if $\mathfrak{A} \in \mathcal{K}$. Let $n$ be the cardinality of the input structure of $M$. We assume w.l.o.g.\ that $M$ halts in at most $n^k$ steps (for some constant $k>0$) and that all computation paths of $M$ end in a halting state.

We represent the non-deterministic guesses of $M$'s non-deterministic transition function as binary string over the alphabet $\binlang$. By our assumption that $M$ halts in at most $n^k$ steps, a guess string corresponding to a computation path, $\pi$, in the configuration graph of $M$ represents an integer $i \in [0,n^k]$. We use $g(\pi)$ to denote this representation by adding 1 as the most significant bit in order to ensure each number $g(\pi)$ encodes a unique guess string. This mechanism is the key to encoding non-determinism in limit \DatalogZ{}~\citeappendix{DBLP:conf/rweb/Kostylev20}. A 0 guess at a certain configuration along a path $\pi$ in the configuration graph of $M$ can be represented by doubling $g(\pi)$, i.e., $g(\pi') = g(\pi) \cdot 2$ and for a 1 guess we double the number and add 1: $g(\pi') = g(\pi) \cdot 2 + 1$. We use $|\pi|$ to denote the length of $\pi$, i.e., the number of configurations in the computation path $\pi$ of the configuration graph of $M$.

\paragraph{Encoding, $\Pi_M$.}
We construct a positive limit \DatalogZ{} query $Q = (\Pi_M, \alpha)$ such that $Q$ evaluates to true on input $enc(\mathfrak{A},<)$ if and only if $\mathfrak{A} \notin \mathcal{K}$. The limit \DatalogZ{} program $\Pi_M$ consists of the following sets of rules:
\begin{itemize}
    \item $\Pi_{succ}$ computing the successor relation on $\mathfrak{A}$,
    \item $\Pi_{input}$: rules describing the input and ensuring that the encoding of $\mathfrak{A}$ is correct,
    \item $\Pi_{enc}$: rules that describe configurations,
    \item $\Pi_{comp}$ rules that enforce computation of $M$, and
    \item $\Pi_{rej}$ rules that check rejection, i.e., if \textit{all} computation paths lead to a rejecting state.
\end{itemize}

\paragraph{Successor Relation, $\Pi_{succ}$.}
As discussed above, we assume the existence of a successor relation $<$ on $\mathfrak{A}$. We let $succ$ be the object predicate encoding this relation and let $succ^+$ indicate the transitive closure of $succ$. We use EDB predicates $first$, $next$, and $last$ to encode the relation accordingly.

\paragraph{Input Encoding, $\Pi_{input}$.}
Note that it is not enough to just explicitly list the atomic facts defining the input configuration of $M$ for a given input string $u$. Hence, we encode successor structures (such as our input) s.t.\ there exist a quantifier-free formula $\beta_u(\mathbf{y})$ s.t.\ $\mathfrak{A} \models \beta_u(\mathbf{a}) \iff$ the $\mathbf{a}$-th symbol of the input configuration of $M$ for input $enc(\mathfrak{A}, <)$ is $u$. Let $\Pi_{input}$ be the Datalog program equivalent to $\beta_u(\mathbf{y})$.

\paragraph{Configuration Encoding, $\Pi_{enc}$.}
Since we assume that $M$ halts after $m = n^k$ steps and therefore uses at most $m$ tape cells, we can encode a configuration of $M$ as a $k$-tuple of objects from the universe of $\mathfrak{A}$. We encode a configuration $C$ with the following $(2k+1)$-ary max limit predicates. Intuitively, these predicates depend on arguments encoding space (tape cells), time (an integer corresponding to a configuration), and a guess string $g(\pi)$ for a run $\pi$ of $M$:
\begin{itemize}
    \item For each $q \in S$: $head_q(\mathbf{t, s}, g(\pi))$. For $k$-tuples over the universe of $\mathfrak{A}$, $\mathbf{t}$ encodes $|\pi|-1$ and $\mathbf{s}$ encodes the head position;
    \item For each $u \in \Gamma$: $tape_u(\mathbf{t,s}, g(\pi))$. For each $i \in [0,m-1]$ where $u$ is the symbol in the $i$-th tape cell in configuration $C$, $\mathbf{s}$ is a $k$-tuple encoding $i$ and $\mathbf{t}$ is as above 
\end{itemize}

\paragraph{Computation, $\Pi_{comp}$.}
To initialize the computation (the initial configuration), we start by encoding the input on the input tape, fill the rest of the tape with blank symbols, encode the head positioned in the left-most position, and set the initial state $q_{init}$ as the current state. The tuple $\mathbf{z_0}$ denotes a tuple consisting of $k$ values $z_0$ and $\mathbf{z_0'}$ is a tuple consisting of $k-1$ repetitions of $z_0$ used for the initial configuration. Rule~\ref{ln:head-init} initializes the head in the left-most position, initializing $head$ with $k$-tuples with the first symbol from the dataset $\mathcal{D}$. Similarly, Rule~\ref{ln:tape_init} encodes the input symbols from the input tape as $tape_u$ for $u \in \set{0,1}$ and Rule~\ref{ln:tape_blank} fills the rest of the tape with blank symbol $\square$. The last argument of all initialized predicates is set to $1 = g(\pi)$ to encode the first, single, initial configuration of $M$'s configuration graph. 
\begin{align}
    \mathit{first}(z_0) & \rightarrow \mathit{head}_{q_{init}}(\mathbf{z_0;z_0}; 1) \label{ln:head-init}\\
    \mathit{first}(z_0), \mathit{input}_u(x) & \rightarrow  \mathit{tape}_u(\mathbf{z_0;z'_0},x;1) \label{ln:tape_init}\\
    \mathit{first}(z_0), \mathit{last}(z_{max}),& \nonumber\\
    succ^+(\mathbf{z'_0}, z_{max}, z_{max}; \mathbf{x}) &\rightarrow  \mathit{tape}_{\square}(\mathbf{z_0;x};1) \label{ln:tape_blank}
\end{align}
We add the following rules for each $q \in S \setminus S^{acc} \cup S^{rej}$, and each $u,v \in \Gamma$ in order to represent an application of the transition function to the current configuration. Depending on the movement of the head, as indicated by the transition function of $M$, we let $succ'$ denote $succ(\mathbf{x'}, \mathbf{x})$ if the head move is $\hleft{}$ or $succ(\mathbf{x}, \mathbf{x'})$ if the head move is $\hright{}$. The atom $\mathit{differ}(\mathbf{x,y})$ is used to ensure that variables $\mathbf{y}$ are different from the head position encoded by $\mathbf{x}$. The rules encode a computation step in the configuration graph of $M$ when 0 is the current guess made by the transition function where the head moves from position $\mathbf{x}$ to position $\mathbf{x'}$, leaving unchanged positions $\mathbf{y}$ intact. In order to model the 0 guess we double $g(\pi)$ by computing $m'=m+m$.
\begin{align}
    \mathit{head}_q(\mathbf{t;x};m), \mathit{tape}_u(\mathbf{t;x};m), \mathit{tape}_u(\mathbf{t,y};m),\nonumber\\
    \mathit{succ}(\mathbf{t,t'}), \mathit{succ'}(\mathbf{x;x'}), \mathit{differ}(\mathbf{x;y}), (m + m = m') \rightarrow \nonumber\\
    \mathit{head}_{q'}(\mathbf{t';x'};m'), \mathit{tape}_{u'}(\mathbf{t';x};m'), \mathit{tape}_v(\mathbf{t';y};m') 
\end{align}

Analogously, we add the following rules for each $q \in S \setminus S^{acc} \cup S^{rej}$ and each $u,v \in \Gamma$ when the current guess is 1. We model a 1 guess as explained above by computing $g(\pi') = g(\pi) \cdot 2 + 1 = m + m + 1$.
\begin{align}
    \mathit{head}_q(\mathbf{t;x};m), \mathit{tape}_u(\mathbf{t;x};m), \mathit{tape}_u(\mathbf{t,y};m), & \nonumber\\
    \mathit{succ}(\mathbf{t,t'}), \mathit{succ'}(\mathbf{x;x'}), \mathit{differ}(\mathbf{x;y}), & \nonumber \\
    (m + m + 1 = m') & \rightarrow \nonumber\\
    \mathit{head}_{q'}(\mathbf{t';x'};m'), \mathit{tape}_{u'}(\mathbf{t';x};m'), \mathit{tape}_v(\mathbf{t';y};m') 
\end{align}

\paragraph{Rejection Check, $\Pi_{rej}$.}
For each rejecting state $r \in S^{reject}$, we derive the $\mathit{reject}$ max limit predicate keeping track of the configuration and corresponding guess string for rejecting configurations. We then propagate this information in a backtracking manner in the configuration graph of $M$'s computation depending on the guess strings for each configuration represented by the variables $m$ and $m'$.
\begin{align}
    \mathit{head}_r(\mathbf{t;x};m) &\rightarrow \mathit{reject}(\mathbf{t};m)\\
    \mathit{reject}(\mathbf{t'};m'), \mathit{succ}(\mathbf{t;t'}),& \nonumber \\ 
    (m + m = m') &\rightarrow \mathit{reject}(\mathbf{t};m)\\
    \mathit{reject}(\mathbf{t'};m'), \mathit{succ}(\mathbf{t;t'}), & \nonumber \\
    (m + m + 1 = m') & \rightarrow  \mathit{reject}(\mathbf{t};m)
\end{align}

Because of the backpropagation we simply need to check if for the initial configuration the $\mathit{reject}$ predicate is derived. It is easy to see that fact $\mathit{confirm}$ is derived if and only if all configurations reach a rejecting state.
\begin{align}
    \mathit{first}(z_0), \mathit{reject}(\mathbf{z_0};1) \rightarrow \mathit{confirm}
\end{align}

The following easily verifiable claims argue the correctness of the encoding above and follow the construction of $\Pi_M$.
\begin{claim}
If $M$ rejects $enc(\mathfrak{A}, <)$ then $(\Pi_{M}, \mathit{confirm})$ evaluates to true on $(\mathfrak{A}, <)$.
\end{claim}
Assume $M$ rejects $enc(\mathfrak{A}, <)$. Per definition of a \npcls{} TM, each computation path in the configuration graph of $M$ leads to a rejecting state. By construction of $\Pi_M$, for each configuration along a path $\pi$ in the configuration graph of $M$, at time $\mathbf{t}$, the atom $reject(\mathbf{t}, m)$ where $m  = |\pi|-1$ is derived. Thus, also for the initial configuration the fact $\mathit{confirm}$ is derived.

\begin{claim}
If $(\Pi_{M}, \mathit{confirm})$ evaluates to true on $(\mathfrak{A},<)$, then $M$ rejects $enc(\mathfrak{A}, <)$.
\end{claim}
Assume $(\Pi_M, \mathit{confirm})$ evaluates to true on $(\mathfrak{A}, <)$. Then for each rejecting state $r$ and each configuration along a computation path $\pi$ of $M$ leading to $r$, the atom $\mathit{reject}(\mathbf{t};m)$ is derived. But then we can ``reconstruct'' the configuration graph of $M$ from the predicates derived by $\Pi_M$. Since $\mathit{confirm}$ was also derived, we can conclude the input in the initial state led to a rejection, so $M$ clearly rejects.
\end{proof}

Together with Fagin's Theorem, Theorem~\ref{thm:desc-compl-main} immediately implies the following Corollary, since the universal fragment of second-order logic, $\forall$SO, is the complement of $\exists$SO and therefore captures \conpcls{}.

\medskip\noindent
\textbf{Corrolary \ref{cor:limit-dz=uso}}
\corrone
\medskip

Let us make several brief remarks on the implications and related work of Corollary~\ref{cor:limit-dz=uso} around universal SO-logic. Least fixpoint logic (LFP) is of great interest in descriptive complexity. It is well known that LFP cannot express every \pcls{}-computable query on finite structures. The Immerman-Vardi Theorem shows that, however, LFP can express all \pcls{}-computable queries on classes of finite ordered structures~\citeappendix{DBLP:conf/focs/Immerman80,DBLP:journals/iandc/Immerman86}. A well-studied fragment of LFP is LFP$_1$ which is an extension of FO logic with least fixpoints of positive formulae without parameters and closure under conjunctions, disjunctions, and existential and universal FO quantification (for a formal discussion we refer the reader to~\citeappendix{kolaitis2007expressive}). This language is also referred to as LFP(FO), i.e., least fixpoints of FO formulae by Immerman in~\citeappendix{DBLP:books/daglib/0095988}. It is well-known that the data complexity of LFP$_1$ is \pcls{}-complete. The expressive power of LFP$_1$ was studied by Kleene and Spector on the class of arithmetic $\mathfrak{N} = (\mathbb{N}, +, \times)$. The seminal \textit{Kleene-Spector Theorem}~\citeappendix{kleene1955arithmetical,spector1969inductively} shows that a relation $R \subseteq \mathbb{N}^k$ is LFP$_1$ definable on $\mathfrak{N}$ if and only if it is definable by a universal second order formula on $\mathfrak{N}$.

The Kleene-Spector Theorem establishes the equivalence LFP$_1(\mathfrak{N}) = \forall$SO($\mathfrak{N}$). Later this result was extended to countable structures $\mathfrak{A}$ that have a so-called FO coding machinery~\citeappendix{moschovakis2014elementary}, i.e., countable structures in which finite sequences of each length can be encoded by elements and decoded in a FO definable way~\citeappendix{kolaitis2007expressive}.

The language $\forall$SO was conjectured to properly include \nlogspacecls{}~\citeappendix{weinstein2007unifying}. If this conjecture holds then our result readily implies that positive limit \DatalogZ{} properly includes \nlogspacecls{}.

\section{Proofs for Section 5}
\medskip\noindent
\textbf{Theorem \ref{thm:negative-res}}
\thmthree
\begin{proof}
Similar to the proof of Theorem 10 in~\cite{DBLP:conf/ijcai/KaminskiGKMH17} we use a reduction from Hilbert's tenth problem~\citeappendix{hilbert1902mathematical}. This problem is to decide if a given Diophantine equation has an integer solution, i.e., solving $P(x_1, ... x_n) = 0$, for a multivariate polynomial $P(x_1,...,x_n)$ with integer coefficients. It is well known that the problem is undecidable even if the solutions are restricted to be non-negative integers, we use this variant of the problem. 

\smallskip\noindent
For every such polynomial $P$, let $D = \set{A(0)}$ and  $\Sigma$ contains the rules: 

\begin{align}
  A(x) \rightarrow & A(x+1) \\
  \bigwedge_{i=1}^{n} A(x_i) \land P(x_1,...,x_n) \doteq 0 \rightarrow & B()  
\end{align}

And let $q = B() \rightarrow Q$. Then 
$D\cup \Sigma \models q$ if and only if $P(x_1,...,x_n) = 0$ has a non-negative integer solution. Thus, deciding whether $D \cup \Sigma \models q$ is undecidable. 
\end{proof}

\begin{theorem}
For a Warded Bound \DatalogZ{} program $\lprog{}$ and a fact $\alpha$, deciding whether $\lprog \models \alpha$ is \pcls{}-complete in data complexity.
\end{theorem}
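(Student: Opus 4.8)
The plan is to prove the two directions of \pcls{}-completeness separately. \textbf{Hardness} is immediate: Warded Bound \DatalogZ{} syntactically contains plain Datalog (a program with no existentials, no numeric atoms and no bound operators is a legal Bound \DatalogZ{} program and is trivially warded), and \factent{} for Datalog is already \pcls{}-hard in data complexity; equivalently one may invoke \pcls{}-hardness of reasoning in stable limit \DatalogZ{} or in Warded \DatalogPm{}. So the substance is the \textbf{membership} direction, which I would obtain by proving that Algorithm~\ref{algo:chase-for-wbldz} is sound, complete, and runs in polynomial time once the ruleset $\Sigma$ (i.e.\ $\lprog'$) is fixed.

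\textbf{Correctness of the algorithm.} For soundness, every fact added in Line~\ref{alg:line-add-fact} must be entailed by $\lprog$: object/exact atoms are standard chase steps; bound atoms are sound because \textit{checkApplicable} derives the \emph{optimal} solution of $\mathcal{C}(\sigma,\pipret)$ — the least (resp.\ greatest) value consistent with a $min$ (resp.\ $max$) head — so no fact stronger than the semantics forces is produced; existential steps introduce fresh nulls, sound as for TGDs. For completeness, the final $\pipret$ must finitely represent a model $\ipret$ of $\lprog$ that is universal enough: by the limit semantics a bound atom $A(\mathbf t,\rho(k))$ stands for all $A(\mathbf t,\rho(k'))$ with $k'\ge k$ (resp.\ $k'\le k$), the value-propagation-graph update in Line~\ref{alg:line-update-vpg} replaces exactly the diverging terms by $\infty$, and by wardedness the termination check only suppresses facts isomorphic (hence homomorphically redundant) to ones already present; together this makes $\pipret$ (plus its implicit bound facts) closed under $\Sigma$ and maps homomorphically into every model, so $\lprog\models\alpha$ iff $\pipret\models\alpha$, which is what Line~\ref{alg:line-model-check} checks. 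The soundness/completeness of the warded chase with isomorphism-based termination is inherited from~\cite{Bellomarini:VadalogSystem}, and the stability-based correctness of the $\infty$-capped numeric fixpoint from~\cite{DBLP:conf/ijcai/KaminskiGKMH17}; the new content is verifying that wardedness extended to numeric variables/atoms still yields the termination guarantee and that type-consistency (Definition~\ref{def:type-consistency}) still implies stability in the presence of nulls.

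\textbf{Polynomial running time in data complexity.} With $\Sigma$ fixed, every predicate arity and every rule's variable count is a constant. I would bound the number of facts ever added to $\pipret$ by a polynomial in $|\dset|$: the number of distinct nulls surviving the warded termination check is $\mathrm{poly}(|\dset|)$ by the warded-forest/isomorphism argument of~\cite{Bellomarini:VadalogSystem}, hence object tuples are $\mathrm{poly}(|\dset|)$; type-consistency forces stability, so along any derivation no numeric value grows without bound and every diverging value is collapsed to $\infty$, whence each bound position carries only $\mathrm{poly}(|\dset|)$ distinct values (the values from $\dset$, finitely many fixed linear combinations of them, and $\infty$); so the number of distinct bound facts is at most (\# object tuples) $\times$ (\# numeric values) $=\mathrm{poly}(|\dset|)$. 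Each non-terminating iteration of the \textbf{repeat} loop adds at least one new fact, so there are polynomially many iterations, each doing polynomial work: updating $\mathcal{G}^{\pipret}_{\lprog}$ touches $\mathrm{poly}(|\dset|)$ nodes/edges and divergence detection is a polynomial graph computation; \textit{checkApplicable} enumerates the $\mathrm{poly}(|\dset|)$ candidate body matches and, for each, solves an integer program in a \emph{fixed} number of numeric variables with polynomial-bit-size coefficients, which is polynomial (integer feasibility and optimization in fixed dimension, Lenstra's algorithm); \textit{checkTermination} is polynomial per~\cite{Bellomarini:VadalogSystem}. Multiplying gives a polynomial-time algorithm, so \factent{} $\in\pcls{}$ in data complexity, and with the hardness above, \pcls{}-completeness.

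\textbf{Main obstacle.} The delicate point is the fact-counting when a bound predicate carries \emph{both} a null and a numeric argument: the bound on derivable atoms is a product of two quantities that must be argued polynomial separately, and one must check that the two control mechanisms interact correctly — two facts agreeing on object positions but differing on a numeric value are not isomorphic and both are kept, yet only polynomially many numeric values occur, so the product stays polynomial. Making this rigorous means re-running the~\cite{Bellomarini:VadalogSystem} termination analysis with numeric arguments treated as constants drawn from a polynomially bounded set, and re-running the~\cite{DBLP:conf/ijcai/KaminskiGKMH17} stability argument with object nulls treated as ordinary constants; I expect this composition of the two arguments — rather than either half on its own — to be the crux.
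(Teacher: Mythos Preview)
Your proposal is correct and follows essentially the same approach as the paper: establish \pcls{}-membership by analyzing the three costly subroutines of Algorithm~\ref{algo:chase-for-wbldz} (the value-propagation-graph update, the fixed-dimension integer program in \textit{checkApplicable}, and the warded termination check of~\cite{Bellomarini:VadalogSystem}), and take \pcls{}-hardness from the contained fragments. If anything, your write-up is more thorough than the paper's own proof, which argues only the per-step polynomial cost of those three subroutines and leaves both the soundness/completeness of the algorithm and the polynomial bound on the total number of iterations implicit; your explicit fact-counting argument and your discussion of the interaction between the null-isomorphism control and the numeric-divergence control fill in points the paper does not spell out.
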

\begin{proof}
The main aspects for the complexity analysis of our main reasoning algorithm, Algorithm 1, are:
\begin{enumerate}
    \item Line 4 
    which updates the value propagation graph of the program w.r.t.\ the current interpretation.
    \item Line 6
    which involves constructing and solving an IP if the body contains numeric atoms, according to the definition of rule applicability of Bound \DatalogZ{} programs;
    \item The call in Line 7
    which uses the termination-check procedure for the Vadalog language proposed in~\cite{Bellomarini:VadalogSystem}.
\end{enumerate}
Updating the value propagation graph includes checking whether a node is on a positive weighted cycle in the propagation graph. This check (and the corresponding weight update) can be done, for instance, with a variant of the well-known Floyd-Warshall algorithm in \pcls{}-time. Constructing and solving an integer program for the applicability check can be done in polynomial time in data complexity since the integer programs have a ﬁxed number of variables (\cite{DBLP:conf/ijcai/KaminskiGKMH17}). Finally, the check-termination procedure requires storing several guiding structures and checking for isomorphism between facts. As shown in~\cite{Bellomarini:VadalogSystem}, for warded existential rules this check can clearly be done in polynomial time. \qedhere
\end{proof}

\section{Warded Bound \DatalogZ{} Examples}
Let us give several further reasoning examples expressed in our language to showcase the expressive power and demonstrate that our language is able to express common and complex queries needed for logical reasoning and data analytic tasks, especially related to KG reasoning.
\begin{example}
Let $\lprog = \set{A(\mathbf{t}, 3), A(\mathbf{s}, 5)}$. The immediate consequence operator of Datalog would derive: $I_\lprog = \lprog$. 
On the other hand, let  $\lprog = \set{A(\mathbf{t}, 3), A(\mathbf{s}, max(5))}$, then every model must contain $\set{A(\mathbf{t}, 3),A(\mathbf{s}, max(k))}, \forall k \leq 5$. Thus, we can finitely represent the interpretation with bounded facts. 
\end{example}

The limitations of Datalog (and \DatalogPm{} languages) are arithmetic computations (e.g., $1+2$ or counting) by definition. A classical example in data analytics is computing the shortest paths in a graph as demonstrated above. 
In a similar fashion, the language can express several other common data analytic queries that involve arithmetic computations such as counting paths between pairs of vertices in a DAG, computing relations in social networks, etc. In fact, it can express all queries limit \DatalogZ{} can. Additionally to such arithmetic tasks, our extension, Warded Bound \DatalogZ{}, contains Warded \DatalogPm{} and is hence capable of advanced reasoning tasks which require existentials in rule heads (for which limit \DatalogZ{} is by definition too weak).

Additionally to the examples above, let us highlight that our language contains Warded \DatalogPm{}. Thus, it can express all Knowledge Graph reasoning use cases Warded \DatalogPm{} can express by definition, and additionally has the capabilities of efficient arithmetic. 

\begin{example}
The following rules are Warded Bound \DatalogZ{} rules by the definition in Section \ref{ssec:synt-fragment}.
\begin{align*}
    R(s, x), P(\mathbf{t}, 3) \rightarrow & \exists \nu R(\nu, 3) \\
    S(s, y), P(\mathbf{t}, z) \rightarrow & R(s, max(y+z))
\end{align*}
The following rule is not, as the existental variable $z$ may not appear in a numeric position of predicate $P$:
\begin{align*}
    P(\mathbf{t}, 3) \rightarrow \exists z P(\mathbf{t}, z)
\end{align*}
For an interpretation $I = \set{R(a, 1), P(b,3), S(a, 2)}$ the first two rules derive (in the first iteration): $\set{R(n,3), R(a, max(3))}$ etc.
\end{example}
\begin{example}
We extend the shortest paths example with the notion of formalizing and computing the intuitive meaning of graph reachability. Warded Bound \DatalogZ{} can express the example as follows:
\begin{align}
    \mathit{Path}(start, min(0)).  \\
    \mathit{Path}(v, min(x)), \mathit{Edge}(v,w,min(y)) \nonumber \\
    \rightarrow path(w, min(x+y)).\\
    \mathit{Path}(v, min(x)), \mathit{Edge}(v,w,min(y)) \nonumber \\
    \rightarrow \exists u \; \mathit{Reachable}(start, u, min(x+y)).
\end{align}
Note that for the bound operator $min(k)$, $\mathit{Reachable}(start, u, min(k))$ implies that $\mathit{Reachable}(start, u, min(k')), k' \geq k$ holds thereby formalizing the shortest reachable path notion.
\end{example}
\bibliographystyleappendix{named}
\bibliographyappendix{main}
\end{document}